\DeclareMathOperator{\argmax}{argmax}
\definecolor{darkgreen}{rgb}{0,0.5,0}
\newcommand{\kibitz}[2]{\ifnum\Comments=1\textcolor{#1}{#2}\fi}
\newtheorem{thm}{Theorem}
\newtheorem{defini}{Definition}
\newtheorem{lemma}{Lemma}
\newtheorem{assumption}{Assumption}
\newtheorem{claim}{Claim}
\title{Causal Markov Decision Processes: Learning Good Interventions Efficiently}
\author{Yangyi Lu\\
	Department of Statistics\\
	University of Michigan\\
	\texttt{yylu@umich.edu}
	\and Amirhossein Meisami \\
	Adobe Inc.\\
	\texttt{meisami@adobe.com}\\
	\and Ambuj Tewari\\
	Department of Statistics\\
	University of Michigan \\
	\texttt{tewaria@umich.edu}
}
\begin{document}
\maketitle

\begin{abstract}
	We introduce causal Markov Decision Processes (C-MDPs), a new formalism for sequential decision making which combines the standard MDP formulation with causal structures over state transition and reward functions.
	Many contemporary and emerging application areas such as digital healthcare and digital marketing can benefit from modeling with C-MDPs due to the causal mechanisms underlying the relationship between interventions and states/rewards. 
	We propose the causal upper confidence bound value iteration (C-UCBVI) algorithm that exploits the causal structure in C-MDPs and improves the performance of standard reinforcement learning algorithms that do not take causal knowledge into account.
	We prove that C-UCBVI satisfies an $\tilde{O}(HS\sqrt{ZT})$ regret bound, where $T$ is the the total time steps,  $H$ is  the episodic horizon, and $S$ is the cardinality of the state space. 
	Notably, our regret bound does not scale with the size of actions/interventions ($A$), but only scales with a causal graph dependent quantity $Z$ which can be exponentially smaller than $A$.
    By extending C-UCBVI to the factored MDP setting, we propose the causal factored UCBVI (CF-UCBVI) algorithm, which further reduces the regret exponentially in terms of $S$. 
	Furthermore, we show that RL algorithms for linear MDP problems can also be incorporated in C-MDPs.
	We empirically show the benefit of our causal approaches in various settings to validate our algorithms and theoretical results.
\end{abstract}

\section{Introduction}\label{sec:intro}
% MDP and RL
In reinforcement learning (RL), the agent interacts with the environment sequentially aiming to maximize its cumulative reward within a given time period. 
The environment is generally modeled as a Markov Decision Process (MDP) that is not fully known to the agent.
At every round $t$, the agent observes the current state $s_t$ and performs an action $a_t$ according to the policy learned so far. 
Then the environment returns a reward $r(s_t,a_t)$ and transitions the agent to the next state $s_{t+1}$ according to the underlying state transition dynamics. 
% This procedure repeats until a stopping criterion is satisfied, such as a given planning horizon $H$. 
The performance is usually evaluated by cumulative regret, i.e., the reward difference between the optimal policy and the agent's policy. 

Many RL algorithms have been developed for the tabular setting~\citep{jaksch2010near,bartlett2012regal,osband2013more,azar2017minimax,zhang2019regret,wang2020long,zhang2020reinforcement} where the state and action spaces have small cardinalities. 
Their regret or sample complexity bounds all scale with the number of states $S$ and actions $A$ which can be very large in practice.

In healthcare applications, the doctor adjusts several features to achieve some desirable clinical outcomes~\cite{liu2020reinforcement}.
For example, different dose levels on medicines, types of exercises, amount of exercises, sleeping time among other conditions may affect patients' overall health condition.
Patients in different states (as captured by, say, BMI, age, status of organs/body systems) usually respond differently to a given treatment.
As the treatment actions are taken, the state of patients will change accordingly.
In digital marketing, online advertising companies aim at attracting customers to buy products by sending marketing emails.
Marketers adjust several variables such as types of products, email content, the time of day to send the email, purposes (promotion, online events, etc.) of the email, in order to improve the likelihood of a customer buying products. 
Customer in different dynamic states, such as loyalty levels and willingness to shop, and different intrinsic states, such as gender, age, and purchasing power, behave differently after receiving commercial emails.
These examples can be modeled as MDPs where the reward variable is the overall health condition or the actual purchase. 
In both cases, the number of interventions and states are exponentially large.
%In this problem, a treatment example is $\{\text{drug}_1=\text{dose}_1,\ldots,\text{drug}_k=\text{dose}_k, \text{excercise} = \text{type}_i, \text{sleeping time} = 10\text{hrs}\}$, thus the size of treatments (interventions) is exponentially large to the number of variables that the agent can manipulate.

% in certain practical settings, \citet{osband2014near} studied factored-MDP problems and proposed algorithms with regret only scaling with parameters exponentially smaller than $S$ or $A$.
%The key idea is to factorize the state set ($\mathcal{S}$) and state-action set ($\mathcal{S}\times\mathcal{A}$): $\mathcal{S} = \mathcal{S}_1\times\cdots\times\mathcal{S}_m$, $\mathcal{S}\times\mathcal{A} = \mathcal{X}_1\times\cdots\times\mathcal{X}_n$, where the state transition and reward dynamics are generated based on both factorizations.
%However, actions cannot always be factorized together with states as described above, which means the state-action set can only be written as: $\mathcal{S}\times\mathcal{A} = \mathcal{S}_1\times\cdots\times\mathcal{S}_m\times\mathcal{A}$. 
%This phenomenon is due to the fact that 
%Under this condition, factored-MDP algorithms~\cite{osband2014near,xu2020reinforcement} actually cannot remove the number of actions $A$ from their regret.

To circumvent the curse of dimensionality where $A$ is enormous, we take a \emph{causal approach}.
In the healthcare problem, the medical and life-style treatments do not affect the state transition or reward directly but indirectly through a few \emph{key} variables (that cannot be manipulated directly, e.g., micronutrient levels, blood oxygen level etc.) that have a direct causal effect on next states and rewards.
Similarly, in the digital marketing problem, interventions on email features affect the state transition and the actual purchase (reward) through \emph{key} variables such as interest/demand for products, price performance, engagement and whether any product has been added to the cart or not.
The causal relations among manipulable variables, \emph{key} variables and other variables in the system can be represented by a causal graph.
If we have such prior causal knowledge, we do not need to treat all interventions independently as standard RL approaches do. 
Instead, we can connect the intervention set with the low dimensional \emph{key} variables in order to reduce the amount of exploration. 
Based on the above idea, we introduce a new formalism: \emph{causal MDPs} (C-MDPs), and prove that the regret of our algorithm \emph{causal upper confidence bound} (C-UCBVI) no longer scales with $A$, however, it only scales with a causal graph dependent quantity $Z$.
We show that there can be cases where $Z$ is exponentially smaller than $A$.

Furthermore, in order to deal with problems where {\em both} $S$ and $A$ are large, we propose two approaches under different assumptions on low-dimensional structures.
Firstly, when the state space can be factorized as $\mathcal{S} = \mathcal{S}_1\times\cdots\times\mathcal{S}_m$, we introduce \emph{causal factored MDPs} (CF-MDPs). 
Structured relations among states can be exploited when the agent has prior understandings on the environment. 
For example, in the healthcare problem, we may know that at one time-step, the state of an organ is usually influenced by the states of its closely related parts, not the entire body. 
Combining our causal approach with factored MDP techniques, we propose causal factored UCBVI (CF-UCBVI) algorithm. We analyze its regret and prove that the explicit dependence on the state size $S$ can be eliminated.
In a nutshell, we deal with large $A$ using causal relations while dealing with large $S$ with state factorizations. 
This approach is different from factored MDPs which directly factorize $\mathcal{S}\times\mathcal{A}$.
In Section~\ref{sec:rel}, we discuss the differences in more detail.
We show that when $\mathcal{A}$ is exponentially large but cannot be factorized with $\mathcal{S}$, standard factored MDP approaches can fail and our causal approach is necessary.
We emphasize that 1) neither factored MDP nor our causal approach can imply one another and 2) the type of available prior knowledge on $\mathcal{S}$ and $\mathcal{A}$ should determine which method one should use.
% \ambuj{A couple of sentences needed here to contrast this with factored MDPs that use state factorization for both large $A$ and large $S$. We should also already note here that the two approaches are incomparable in expressive power and give a forward pointer to the related works section.}
Secondly, when the state transition and reward functions can be modeled linearly with feature vectors over the state and key variable pairs, we show that RL algorithms for standard linear MDPs~\cite{jin2019provably} can be incorporated in C-MDPs.

Our main contributions are summarized below:
\begin{enumerate} [wide]
	\item We study a new formalism: causal MDPs, in which we search for good interventions over an exponentially large space.  In the bandit literature, \emph{causal bandits} have been studied recently~\cite{lattimore2016causal,sen2017identifying,lu2019regret,nair2020budgeted} where researchers have used causal graphs to model the relations among interventions and the reward.
	\emph{In this paper, we extend the idea behind causal bandits to MDPs}.
	We propose causal upper confidence bound (C-UCBVI) algorithm that enjoys $\tilde{O}(HS\sqrt{ZT})$ regret. 
	In our regret bound, $Z$ is a causal graph dependent quantity that can be exponentially smaller than the number of actions $A$.
	Our result is superior to the guarantees available for standard RL algorithms whose regret scales with $A$.
	
	\item Building on causal MDPs, We propose two approaches to deal with cases when the state space is also enormous.
	In our first approach, we introduce causal factored MDPs.
	We propose causal factored upper confidence bound (CF-UCBVI) algorithm that achieves $\tilde{O}(H\sum_{i=1}^{m}\sqrt{S_i S[I_i]ZT})$ regret when we factorize $\mathcal{S}$ as $\mathcal{S}_1\times\cdots\times\mathcal{S}_m$.
	In this result, $S_i$ and $\mathcal{S}[I_i]$ denotes the cardinalities for $\mathcal{S}_i$ and $\mathcal{S}$ restricted to scope $I_i$ \footnote{We provide formal definitions in Section~\ref{sec:setup}.}, which can both be
	exponentially smaller than the number of states $S$. 
% 	\ambuj{The $S_i$ and $S[I_i]$ notation is not clear here and something more meaningful about what they mean needs to be said here.}
	In the second approach, we show that existing linear MDP algorithm can be well adapted to causal MDP problems and achieve $\tilde{O}({\sqrt{d^3H^3T}})$ regret, where $d$ is the dimension for features over the state and \emph{key} variable pairs.
	Both approaches reduces $S$ dependency from the regret.
	
\end{enumerate}

\section{Related Work}\label{sec:rel}
Our work on causal (factored) MDPs is directly inspired by recent work on \emph{causal bandit} problems~\cite{lattimore2016causal,sen2017identifying,lee2018structural,lu2019regret,nair2020budgeted}, where the arms of the bandit problem are interventions on a set of variables and their relations with the reward are captured by a causal graph~\cite{pearl2000causality}. 
In causal bandits, the causal graph is composed of manipulable/non-manipulable variables and the reward variable. 
For causal (factored) MDPs, we need to consider two types of graphs: one is the reward graph and the other is the state transition graph. 
Our proposed causal MDP algorithms exploit these two types of causal graphs in order to learn the MDP dynamics efficiently.

A classic approach to deal with exponentially large state and action spaces is to use factored MDPs~\cite{koller2009probabilistic}. Recent work has provided formal regret guarantees for factored MDPs ~\cite{osband2014near,xu2020reinforcement,tian2020towards}. 
The key idea is to factorize the state set ($\mathcal{S}$) and state-action set ($\mathcal{S}\times\mathcal{A}$): $\mathcal{S} = \mathcal{S}_1\times\cdots\times\mathcal{S}_m$, $\mathcal{S}\times\mathcal{A} = \mathcal{X}_1\times\cdots\times\mathcal{X}_n$. 
The state transition and reward dynamics are generated based on these two factorization structures.
However, in practice, actions do not always have local effect on the outcomes and thus cannot always be factorized together with states as described above.
For example, it is almost impossible to directly factorize the Cartisian product between the state of organs/BMI/etc. and the treatments, because some medical treatments, especially life-style treatments usually affect all of the organs/BMI/etc.
Same for email campaign, all the email features affect the loyalty building and the actual purchase to some degree. 
In these cases, the state-action set can only be written as: $\mathcal{S}\times\mathcal{A} = \mathcal{S}_1\times\cdots\times\mathcal{S}_m\times\mathcal{A}$. 
Under this condition, existing factored-MDP algorithms cannot avoid a dependence on $A$ in their regret, so our causal approaches are necessary.
We emphasize that our causal approaches and factored MDP approaches have their advantages under different assumptions.
Causal approaches are preferred when there is prior causal knowledge on $\mathcal{S},\mathcal{A}$ and the outcomes, while factored MDP approaches are preferred when actions have local effect on the outcomes so that $\mathcal{A}$ can be factorized with $\mathcal{S}$.

There is another line of work on causal reinforcement learning~\cite{zhang2016markov,zhang2019near,namkoong2020off,zhang2020designing} studying MDPs or dynamic treatment regimes with unobserved confounders. 
Our paper does not focus on confounding issues.
We model the related variables by causal graphs following the idea behind \emph{causal bandits}.

\section{Preliminaries}\label{sec:setup}
We follow standard RL/graphical terminology and notation~\cite{azar2017minimax,koller2009probabilistic} to state the casual (factored) MDP problems.
\paragraph{Causal Graph.} 
%We start by describing the causal graph and its role in MDP problems. 
A directed acyclic graph $\mathcal{G}$ is used to model the causal structure over a set of random variables $\mathbf{X} = \{X_1,\ldots,X_N\}$. 
We denote the joint distribution over $\mathbf{X}$ along graph $\mathcal{G}$ at state $s$ by $P(\cdot|s)$. 
The parents of a variable $X_i$, denoted by $\text{Pa}_{X_i}$, include all variables $X_j$ such that there is an edge from $X_j$ to $X_i$ in $\mathcal{G}$. 
A size $m$ intervention (action) corresponds to $\text{do}(\mathbf{X}_{\text{sub}} = \mathbf{x})$ such that $|\mathbf{X}_{\text{sub}}|=m$, which assigns the values $\mathbf{x} = \{x_{1},\ldots,x_{m}\}$ to the corresponding variables. 
For each variable $X\in \mathbf{X}_{\text{sub}}$, the intervention also removes all edges from $\text{Pa}_{X}$ to $X$ and the resulting graph defines a probability distribution $P(\mathbf{X}_{\text{sub}}^c|s, \text{do}(\mathbf{X}_{\text{sub}}=\mathbf{x}))$ over $\mathbf{X}_{\text{sub}}^c:= \mathbf{X}\setminus\mathbf{X}_{\text{sub}}$.
We use $|\cdot|$ to denote the cardinality of a set.

\paragraph{MDP.} A tabular episodic MDP is defined by a tuple $(\mathcal{S},\mathcal{A},\mathbb{P},R, H)$, where $\mathcal{S}$ and $\mathcal{A}$ are the set of states and actions with cardinalities $|\mathcal{S}|=S$ and $|\mathcal{A}|=A$, $H$ is the planning horizon in each episode, $\mathbb{P}$ is the state transition matrix such that $\mathbb{P}(\cdot|s,a)$ gives the distribution over next state if an action $a$ is taken on state $s$, and $R:\mathcal{S}\times\mathcal{A}\rightarrow[0,1]$ is the deterministic reward function over a state action pair.
The agent interacts with the environment in a sequence of episodes: an initial state $s_1$ is picked arbitrarily by an adversary. At each step $h\in[H]$, the agent observes state $s_h\in\mathcal{S}$, picks an action $a_h\in\mathcal{A}$ and receives reward $R(s_h,a_h)$. The episode ends when $s_{H+1}$ is reached.

The policy is expressed as a mapping $\pi:\mathcal{S}\times[H]\rightarrow \mathcal{A}$.
We use $V_h^\pi:\mathcal{S}\rightarrow\mathbb{R}$ to denote the value function at step $h$ under policy $\pi$, so that $V_h^\pi(s)$ gives the expected sum of remaining rewards received under policy $\pi$, starting from $s_h=s$, until the end of episode:
\begin{align*}
	V_h^\pi(s) \overset{\mathrm{def}}{=} \mathbb{E}\left[\sum_{h'=h}^{H}R(s_{h'},\pi(s_{h'},h'))|s_h=s\right].
\end{align*}
We use $Q_h^\pi:\mathcal{S}\times\mathcal{A}\rightarrow\mathbb{R}$ to denote $Q$-value function at step $h$ under policy $\pi$ so that $Q_h^\pi(s,a)$ gives the expected sum of remaining rewards received under policy $\pi$, starting from $s_h=s, a_h=a$, till the end of the episode:
\begin{align*}
	Q_h^\pi(s,a)
	\overset{\mathrm{def}}{=} R(s,a)
	+
	\mathbb{E}\left[\sum_{h'=h+1}^{H}R(s_{h'},\pi(s_{h'},h'))\mid s_h=s,a_h=a\right].
\end{align*}
An optimal policy $\pi^*$ gives the optimal value $V_h^*(s):=\sup_\pi V_h^\pi(s)$ for all $s\in\mathcal{S}$ and $h\in[H]$. 
The policy $\pi$ at every step $h$ defines the state transition kernel $\mathbb{P}_h^\pi$ and the reward function $r_h^\pi$ as $\mathbb{P}_h^\pi(y|s)\overset{\mathrm{def}}{=}\mathbb{P}(y|s,\pi(s,h))$ and $r_h^\pi(s)\overset{\mathrm{def}}{=}R(s,\pi(s,h))$ for all $s$. 
For every $V:\mathcal{S}\rightarrow \mathbb{R}$ the right linear operators $\mathbb{P}\cdot$ and $\mathbb{P}_h^\pi\cdot$ are also defined as $(\mathbb{P}V)(s,a)\overset{\mathrm{def}}{=}\sum_{s'\in\mathcal{S}}\mathbb{P}(s'|s,a)V(s')$ for all $(s,a)$ and $(\mathbb{P}_h^\pi V)(s)\overset{\mathrm{def}}{=}\sum_{s'\in\mathcal{S}}\mathbb{P}_h^\pi(s'|s)V(s')$ for all $s$, respectively. 
%The Bellman equation and the Bellman optimality equation are:
%\begin{align*}
%	&\begin{cases}
%	V_h^\pi(s) = Q_h^\pi(s,\pi(s,h))\\
%	Q_h^\pi(s,a) = (R+PV_{h+1}^\pi)(s,a)\\
%	V_{H+1}^\pi(s) = 0, \forall s\in\mathcal{S}
%	\end{cases}
%	\text{and   }
%	\begin{cases}
%	V_h^*(s) = \max_{a\in\mathcal{A}}Q_h^*(s,a)\\
%	Q_h^*(s,a) = (R+PV_{h+1}^*)(s,a)\\
%	V_{H+1}^*(s) = 0, \forall s\in\mathcal{S}.
%	\end{cases}
%\end{align*}

%\begin{assumption}[MDP regularity]
%	We assume $\mathcal{S}$ and $\mathcal{A}$ are finite sets with cardinalities denoted by $S, A$, respectively. We also assume that the immediate reward for an state action pair $R(x,a)\in[0,1]$ is deterministic.
%\end{assumption}

%In this paper, we focus on the setting where the reward function $R$ is known, extending our algorithms to the unknown stochastic rewards is easy.

\paragraph{Causal MDP (C-MDP).} 
In causal MDPs, the actions are composed by interventions. 
At every state $s$, we define two causal graphs: the reward graph $\mathcal{G}^R(s)$ and the state transition graph $\mathcal{G}^S(s)$.
We denote the reward and state variable by $\mathbf{R}$ and $\mathbf{S}$.
The learner can intervene on variables $\mathbf{X}^I$, while the parent variables of $\mathbf{R}$: $\text{Pa}_{\mathbf{R}}:=\mathbf{Z}^\mathbf{R}$ and the parent variables of $\mathbf{S}$: $\text{Pa}_{\mathbf{S}}:=\mathbf{Z}^\mathbf{S}$ cannot be intervened. \footnote{Otherwise, one can simply restrict the intervention set to those only intervening over $\text{Pa}_{\mathbf{R}}$, then the problem is trivially reduced to a standard MDP problem.}
At every state $s$, causal graphs $\mathcal{G}^\mathbf{R}(s)$ and $\mathcal{G}^\mathbf{S}(s)$ contain variables $\mathbf{X}^\mathbf{R} = \mathbf{X}^I\cup \mathbf{Z}^\mathbf{R}\cup \mathbf{R}$ and $\mathbf{X}^\mathbf{S} = \mathbf{X}^I\cup \mathbf{Z}^\mathbf{S}\cup \mathbf{S}$, respectively.
Note that the identity of variables on causal graphs does not vary by state, but the underlying distributions can change. 
In Figure~\ref{fig:causal_graphs}, we use a digital marketing example to explain these notations.
\begin{figure*}[t] 
	\centering
	\includegraphics[width=.45\textwidth]{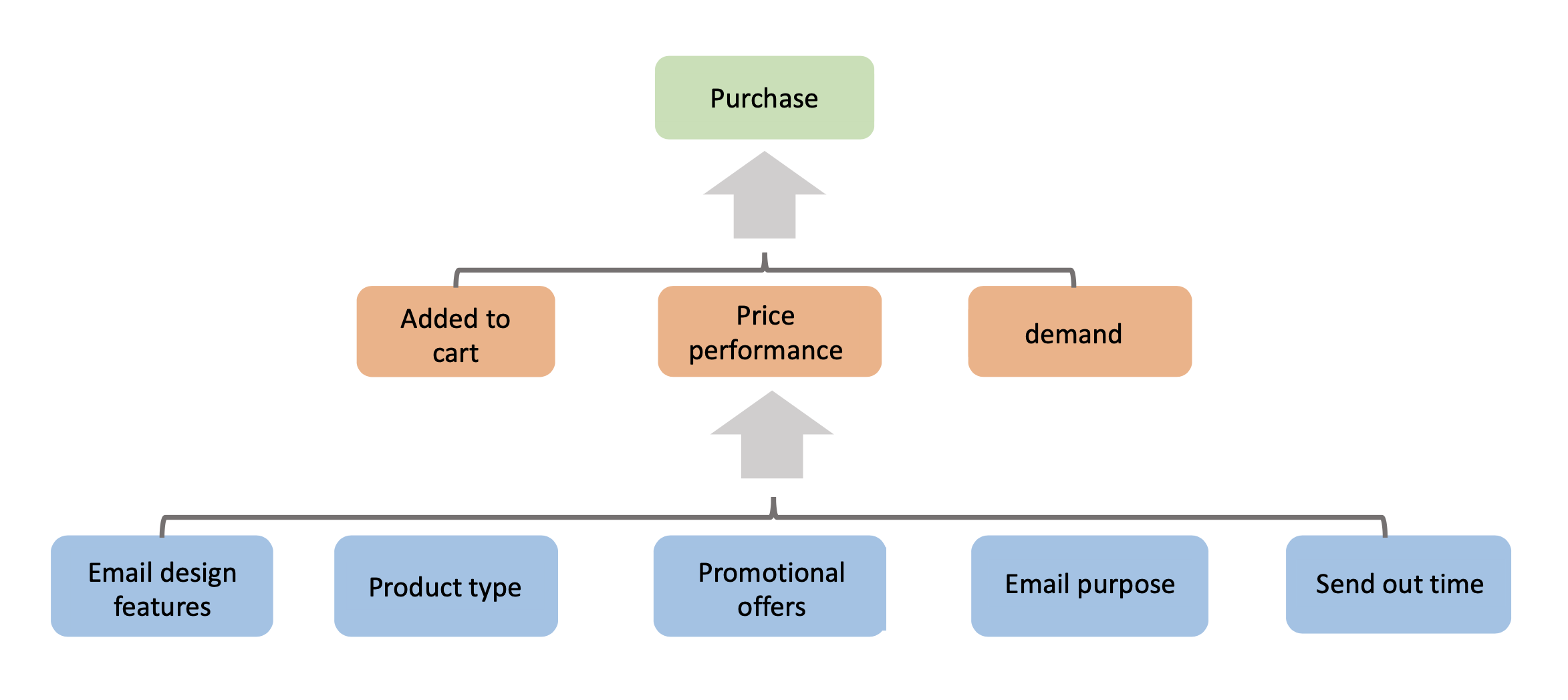}
	~
	\includegraphics[width=.45\textwidth]{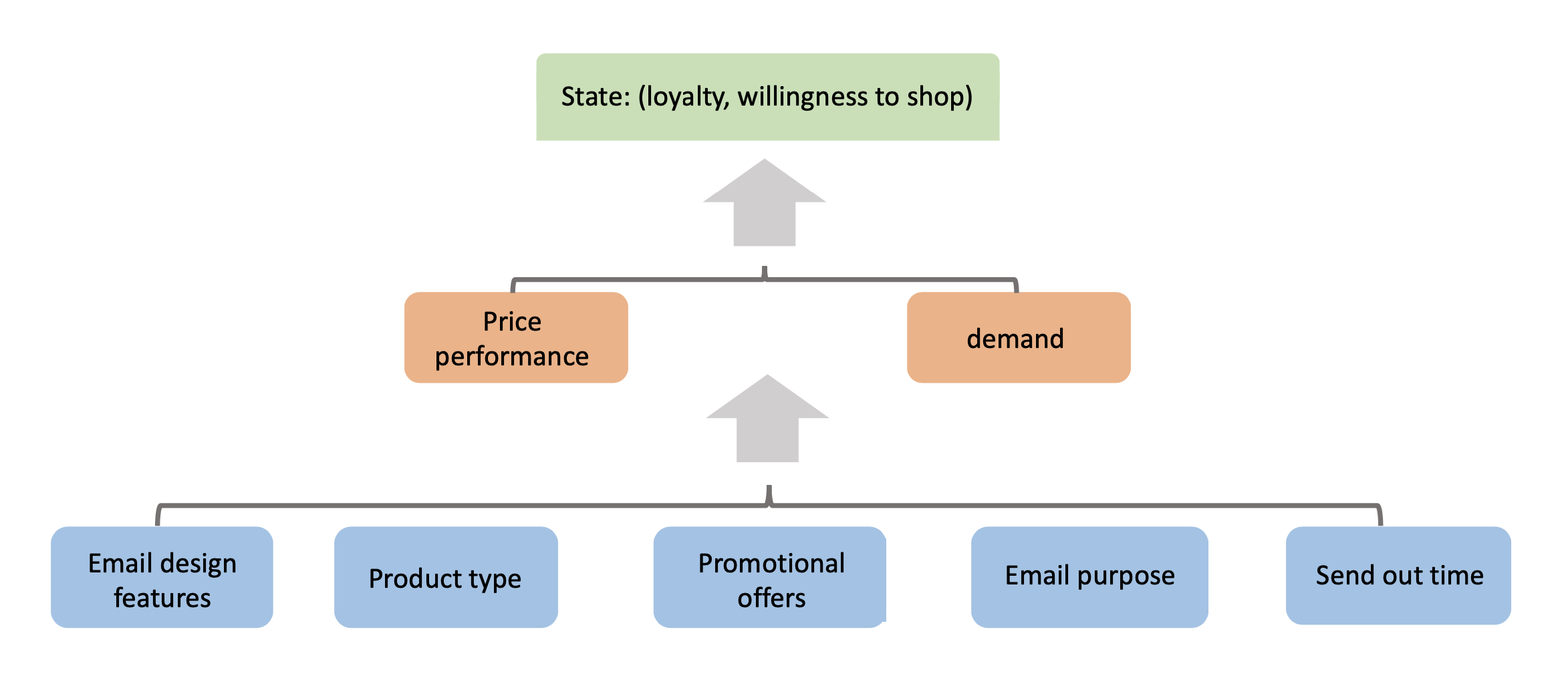}
	\caption{Reward causal graph (left) and state transition causal graph (right) at a state for the digital marketing problem. In each graph, all variables can be categorized into three layers: the outcome variable on top layer corresponds to the green node, i.e. the reward variable ($\mathbf{R}$) and the state variable ($\mathbf{S}$); the manipulable variables regarding to emails on bottom layer are in blue ($\mathbf{X}^I$); direct parent variables of the outcome variables ($\mathbf{Z}^\mathbf{R}$ and $\mathbf{Z}^\mathbf{S}$) are marked in orange between the green and blue. 
	We use grey arrows to describe the complex causal relationships between any connected two layers. 
	At every time step, marketers adjust the blue nodes and passively observe the values of nodes in orange and green afterwards. 
	Other than $P(\mathbf{z}|s,a)$ quantities, our causal MDP algorithms only require the knowledge of the identity of manipulable variables, the outcome variables and their corresponding direct parents, instead of the entire causal structures including all graph edges.
	}
	\label{fig:causal_graphs}
\end{figure*}

In our causal MDP algorithms, a learner is given the intervention set $\mathcal{A}$, the identity of parent variables $\mathbf{Z}:=\mathbf{Z}^\mathbf{R}\cup\mathbf{Z}^\mathbf{S}$ and conditional distributions of $\mathbf{z}\in\mathcal{Z}$ given a $(s,a)\in\mathcal{S}\times\mathcal{A}$ pair: $P(\mathbf{z}|s,a)$, where $\mathcal{Z}$ denotes the domain set for $\mathbf{Z}$.
We use $Z$ as the size of $\mathcal{Z}$.
%Note that $\text{Pa}_\mathbf{R}$ can be replaced by other variable set $\mathbf{W}$ as long as $\mathbf{W}$ d-separates $R$ and $\mathcal{X}_I$ and our proposed algorithms will always benefit if $|\mathbf{W}|\leq |\mathcal{X}_I| = A$.
%\begin{assumption}[Conditional probability over causal graph]\label{assump:graph}
%	We assume the graph structure $\mathcal{G}$ along with $P(Pa_R=\mathbf{z}|s,a), \forall \mathbf{z}\in\mathcal{Z}, a\in\mathcal{A}, s\in\mathcal{S}$ are known.
%\end{assumption}
At each step $h\in [H]$, the learner observes a reward $R(s_h,a_h)$ and the realizations of $\mathbf{Z}$: $\mathbf{z}_h$. Using these causal information, one can re-write the state-transition and reward functions as follows:
\begin{align*}
	\mathbb{P}(s'|s,a) &= \sum_{\mathbf{z}\in\mathcal{Z}}\mathbb{P}(s'|s,\mathbf{Z}=\mathbf{z})P(\mathbf{Z}=\mathbf{z}|s,a),\\
	R(s,a) &= \sum_{\mathbf{z}\in\mathcal{Z}}R(s,\mathbf{Z}=\mathbf{z})P(\mathbf{Z}=\mathbf{z}|s,a),
\end{align*}
where $R(s,\mathbf{Z} = \mathbf{z})\overset{\mathrm{def}}{=} \mathbb{E}[\mathbf{R}|s,\mathbf{z}]$ denotes the expected reward given a state and parent pair.
%Thus in the causal MDP problem, we need to estimate additional transition terms $\mathbb{P}_h(s'|s,\text{Pa}_Y=\mathbf{Z})$.
We next define a $q$-value function: $q_h^\pi: \mathcal{S}\times\mathcal{Z}\rightarrow\mathbb{R}$, such that $q_h^\pi(s,\mathbf{z})$ gives the expected sum of rewards received under policy $\pi$, starting from $s_h=s, \mathbf{z}_h = \mathbf{z}$, till the end of the episode.  In symbols we have:
\begin{align*}
q_h^\pi(s,\mathbf{z}) \overset{\mathrm{def}}{=} R(s,\mathbf{z})+\mathbb{E}\left[\sum_{h'=h+1}^{H}R(s_{h'},\pi(x_{h'},h'))\mid s_h=s,\mathbf{z}_h=\mathbf{z}\right].
\end{align*}

By definition, $Q_h^\pi(s,a)$ can be written as $\sum_{\mathbf{z}\in\mathcal{Z}} P(\mathbf{Z}=\mathbf{z}|s,a)q_h^\pi(s,\mathbf{z})$.
A causal MDP is then defined as an MDP equipped with dynamic causal graphs $\mathcal{G}^\mathbf{R}, \mathcal{G}^\mathbf{S}$ and can be represented by a tuple $\mathcal{M}_C = \left(\mathcal{S},\mathcal{A}, \mathbb{P}, R, H, \mathcal{G}^\mathbf{R}, \mathcal{G}^\mathbf{S}\right)$.

\paragraph{Causal Factored MDP (CF-MDP). } A causal factored MDP is a causal MDP whose reward and state-transition dynamics have some conditional independence structures. To formally describe this problem, we first present some related factored MDP definitions.

\begin{defini}[Scope operation for factored set $\mathcal{S} = \mathcal{S}_1\times\ldots\times\mathcal{S}_m$]
	For any subset of indices $I\subseteq \{1,\ldots,m\}$, define the scope set $\mathcal{S}[I]\triangleq\bigotimes_{i\in I}\mathcal{S}_i$.
	For any $s\in\mathcal{S}$, define the scope variable $s[I]\in\mathcal{S}[I]$ to be the value of the variables $s_i\in\mathcal{S}_i$ with indices $i\in I$. For singleton sets $I$, we write $s[\{i\}]$ as $s[i]$ for simplicity.
\end{defini}

%The factored state transition refers to the conditional independence of the transitions of the $m$ state components, which means the transition of the $i$-th state component $s_i$ is determined only by its scope $s[I_i^P]$ and $\text{Pa}_R$.
%Let $I_i^P\subset [m]$ be the reward scope index set for 
%Mathematically, we have $\mathbb{P}(s'|s,\mathbf{Z}) = \prod_{i=1}^{m} \mathbb{P}_i\left(s'[i]|s[I_i^P],\mathbf{Z}\right)$, for $s',s,\mathbf{Z}\in\mathcal{S}\times\mathcal{S\times\mathcal{Z}}$. Let 
%
%Similarly, the reward function $R$ can also be factored
We use $\mathcal{P}_{\mathcal{X},\mathcal{Y}}$ as a set of functions mapping elements of a finite set $\mathcal{X}$ to probability mass functions over a finite set $\mathcal{Y}$. 
\begin{defini}[Factored state transition in CF-MDPs]
	The transition function class $\mathcal{P}$ is factored over $\mathcal{S}\times \mathcal{Z} = \mathcal{S}_1\times\cdots\times\mathcal{S}_m\times\mathcal{Z}$ and $\mathcal{S} =  \mathcal{S}_1\times\cdots\times\mathcal{S}_m$ with scopes $I_1,\ldots,I_m$ if and only if, for all $\mathbb{P}\in\mathcal{P}, s,s'\in\mathcal{S}, \mathbf{z}\in\mathcal{Z}$, there exist some 
	$\{\mathbb{P}_i\in\mathcal{P}_{\mathcal{S}[I_i]\times\mathcal{Z},\mathcal{S}_i}\}_{i=1}^m$ such that
% 	\begin{align*}
		$\mathbb{P}(s'|s,\mathbf{z}) = \prod_{i=1}^{m} \mathbb{P}_i\left(s'[i]|s[I_i],\mathbf{z}\right)$.
% 	\end{align*}
\end{defini}

\begin{defini}[Factored deterministic reward functions]
	The reward function class $\mathcal{R}$ is factored over $\mathcal{S}\times \mathcal{Z} = \mathcal{S}_1\times\cdots\times\mathcal{S}_m\times\mathcal{Z}$ with scopes $J_1,\ldots,J_m$ if and only if, for all $R\in\mathcal{R}, s\in\mathcal{S}, \mathbf{z}\in\mathcal{Z}$, there exist some $\{R_i\in\mathcal{P}_{\mathcal{S}[J_i],\mathcal{Z}}\}_{i=1}^m$ such that
% 	\begin{align*}
		$R(s,\mathbf{z}) = \sum_{i=1}^m	R_i(s[J_i],\mathbf{z})$.
% 	\end{align*}
\end{defini}
A causal factored MDP is then defined to be a causal MDP with factored rewards and factored transitions. We can write it as a tuple $\mathcal{M}_{CF} = \left( \{\mathcal{S}_i\}_{i=1}^m, \mathcal{A},\{I_i\}_{i=1}^m, \{\mathbb{P}_i\}_{i=1}^m, \{J_i\}_{i=1}^m, \{R_i\}_{i=1}^m, H, \mathcal{G}^\mathbf{R},\mathcal{G}^\mathbf{S}\right)$.
Notably, we do not factorize $\mathcal{Z}$ in the state transition and reward function classes.
\paragraph{Regret.}
We denote the number of episodes by $K$, starting state and policy by $s_{k,1}$ and $\pi_k$ for each episode. We measure the performance of the learner over $T = KH$ steps by the total expected regret $R_K$:
\begin{align*}
	R_K := \sum_{k=1}^{K}(V_1^*(s_{k,1})-V_1^{\pi_k}(s_{k,1})).
\end{align*}
The goal of learner is to follow a sequence of policies $\pi_1,\ldots,\pi_K$ such that $R_K$ is as small as possible.

In this paper, we focus on the setting where the reward functions $R$ and $\{R_i\}_{i=1}^m$ are known, but extending our algorithm to unknown stochastic rewards poses no real difficulty~\cite{azar2017minimax}.
\begin{assumption}[Causal (Factored) MDP Regularity]
	For C-MDPs, we assume $\mathcal{S}$, $\mathcal{A}$, $\mathcal{Z}$ are finite sets with cardinalities $S$ and $A$ and $Z$, respectively. 
	The immediate rewards $R(s,\mathbf{z})\triangleq\mathbb{E}[\mathbf{R}|s,\mathbf{z}]\in[0,1]$ are known for $s\in\mathcal{S}, \mathbf{z}\in\mathcal{Z}$. 
	For CF-MDPs, $\mathcal{S}[I_i]$ and $\mathcal{S}[J_i]$ are finite sets with cardinalities $S[I_i]$ and $S[J_i]$. The immediate rewards in every reward scope $R_i(s[J_i],\mathbf{z})\in[0,1]$ are known for $s[J_i]\in\mathcal{S}[J_i],\mathbf{z}\in\mathcal{Z}, i=1,\ldots,m$.
\end{assumption}

\section{Causal UCBVI} \label{sec:C}
In this section, we propose and analyze an efficient algorithm for causal MDPs.
We generalize \textit{upper confidence bound value iteration}~\citep{azar2017minimax} algorithm (UCBVI) to its causal counterpart and show that the regret bound of our causal algorithm only scale with a factor which can be exponentially smaller than the size of interventions.

UCBVI is near-optimal when causal information is non-available. 
\citet{azar2017minimax} showed that under conditions $T\geq H^3S^3A$ and $SA\geq H$, using a Hoeffding ``exploration bonus", one can achieve a high probability regret bound of $\tilde{O}\left(H\sqrt{SAT}\right)$ while using a Bernstein-Freedman ``exploration bonus", one can further achieve a minimax regret $\tilde{O}\left(\sqrt{HSAT}\right)$, that matches the established lower bound $\Omega\left(\sqrt{HSAT}\right)$ of \citep{jaksch2010near} up to logarithmic factors.
However, in the causal MDP setting, the intervention set is huge that makes UCBVI and other standard RL algorithms impractical since their regret all scale with $\sqrt{A}$.

To overcome this issue, we propose causal   UCBVI (C-UCBVI) in Algorithm~\ref{algo:CUCBVI}.
At every episode $k$, C-UCBVI calls Algorithm~\ref{algo:CUCBQval} to update the state transition probabilities $\mathbb{P}(s'|s,\mathbf{z})$ by the frequencies of corresponding state-$\mathbf{Z}$-state and state-$\mathbf{Z}$ tuples using past data. 
We then follow the idea of UCBVI that updates the upper bounds of value functions and $Q$ functions at every level $h$ by value iteration using an empirical Bellman operator and a confidence bonus.
However, instead of directly updating the upper confidence bound of $Q$ functions over state-action pairs, our algorithm updates the upper bounds of $q$-value functions using Hoeffding ``exploration bonus" (Algorithm~\ref{algo:bonus}) over state-$\mathbf{Z}$ pairs denoted by $q_{k,h}(s,\mathbf{z})$. 
We then update the upper confidence bound of $Q$ function for every $(s,a)$ pair as following:
\begin{align*}
	Q_{k,h}(s,a) = \sum_{\mathbf{z}\in\mathcal{Z}}P(\mathbf{z}|s,a)q_{k,h}(s,\mathbf{z}).
\end{align*}
Upper bound for value functions are then updated by $V_{k,h}(s) = \max_{a\in\mathcal{A}}Q_{k,h}(s,a)$.

Given these estimated value functions, the learner at state $s$ performs the action that maximizes $Q_{k,h}(s,a)$ among all $a\in\mathcal{A}$. The environment reveals the values of variables $\mathbf{Z}$ denoted by $\mathbf{z}_{k,h}$. 
In summary, C-UCBVI only estimates the state transition probabilities and $q$-value functions for all $(s,\mathbf{z})\in S\times\mathcal{Z}$ pairs.

In Theorem~\ref{thm:CUCBVI}, we prove that the regret of C-UCBVI does not scale with $\sqrt{A}$, instead scales with: $\sqrt{Z}$. 
Suppose $\mathbf{X}^I$ and $\mathbf{Z}$ contain $N$ and $n$ variables, respectively, and for simplicity we assume every variable in $\mathbf{X}^I\cup\mathbf{Z}$ can take on $k$ different values. 
In practical applications, $N$ is usually greater than $n$, for example, in digital marketing, the number of email features can be a lot more than the number of \emph{key} variables such as price performance and demand. 
In this case, $Z = k^n$ is exponentially smaller than $A = k^N$.
In summary, C-UCBVI outperforms standard RL algorithms as long as $Z\leq A$.

\begin{algorithm}[t]
	\caption{C-UCBVI}
	\label{algo:CUCBVI}
	\begin{algorithmic}[1]
		\STATE \textbf{Input:} action set $\mathcal{A}$, states $\mathcal{S}$, identity of parent variables $\mathbf{Z}$.
		\STATE Initialize data $\mathcal{H} = \phi$, $Q_{k,h}(s,a) = H$ for all $k,h,s,a$.
		\FOR{episode $k=1,\ldots,K$}
		%\STATE $Q_{k,h}(x,\text{Pa}_Y=\mathbf{Z}) = \text{C-UCB-Q-values}(\mathcal{H})$.
		\FOR{step $h = 1,\ldots,H$}
		\STATE Take action $a_{k,h} = \argmax_{a\in\mathcal{A}} Q_{k,h}(s,a)$ and observe values of parent variables $\mathbf{z}_{k,h}$.
		\STATE Update $\mathcal{H} = \mathcal{H}\cup (s_{k,h},a_{k,h},\mathbf{z}_{k,h},s_{k,h+1})$.
		\ENDFOR
		\STATE $Q_{k,h}(s,a) = \text{C-UCB-Q-values}(\mathcal{H})$.
		\ENDFOR
	\end{algorithmic}
\end{algorithm}

\begin{algorithm}[t]
	\caption{C-UCB-Q-values}
	\label{algo:CUCBQval}
	\begin{algorithmic}[1]
		\STATE {\bfseries Input:} Bonus algorithm (Algorithm~\ref{algo:bonus}), Data $\mathcal{H}$.
		%\FOR{$(x,\mathbf{Z},y)\in\mathcal{S}\times\mathcal{Z}\times\mathcal{S}$}
		\STATE $N_k(s,\mathbf{z},y) = \sum_{(s',\mathbf{z}',y')}\mathbb{1}(s'=s,\mathbf{z}'=\mathbf{z},y'=y)$ for all $(s,\mathbf{z},y)\in\mathcal{S}\times\mathcal{Z}\times\mathcal{S}$.
		\STATE $N_k(s,\mathbf{z}) = \sum_{y\in\mathcal{S}}N_k(s,\mathbf{z},y)$ for all $(s,\mathbf{z})\in\mathcal{S}\times\mathcal{Z}$.
		\STATE Let $\mathcal{K} = \{(s,\mathbf{z})\in \mathcal{S}\times\mathcal{Z},N_k(x,\mathbf{z})>0\}$.
		\STATE Estimate $\hat{\mathbb{P}}_k(y|s,\mathbf{z}) = \frac{N_k(s,\mathbf{z},y)}{N_k(s,\mathbf{z})}$ for all $(s,\mathbf{z})\in\mathcal{K}$.
		\STATE Initialize $V_{k,H+1}(s) = 0$ for all $s\in\mathcal{S}$.
		\FOR{$h=H,H-1,\ldots,1$}
		\FOR{$(s,\mathbf{z})\in\mathcal{S}\times\mathcal{Z}$}
		\IF{$(s,\mathbf{z})\in\mathcal{K}$}
		\STATE $b_{k,h}(s,\mathbf{z}) = \text{bonus}(N_k(s,\mathbf{z}))$
		\STATE $q_{k,h}(s,\mathbf{z}) = \min(H,R(s,\mathbf{z})+\hat{\mathbb{P}}_kV_{k,h+1}(s,\mathbf{z})+b_{k,h}(s,\mathbf{z}))$
		\ELSE
		\STATE $q_{k,h}(s,\mathbf{z}) = H$
		\ENDIF
		\ENDFOR
		\FOR{$(s,a)\in\mathcal{S}\times\mathcal{A}$}
		\STATE
		$Q_{k,h}(s,a)=\sum_{\mathbf{z}\in\mathcal{Z}}P(\text{Pa}_R=\mathbf{z}|s,a)q_{k,h}(s,\mathbf{z})$
		\ENDFOR
		\STATE $V_{k,h}(s) = \max_{a\in\mathcal{A}} Q_{k,h}(s,a)$
		\ENDFOR
		\STATE {\bfseries Output:} Q-values $Q_{k,h}(s,a)$ for all $(s,a)\in \mathcal{S}\times\mathcal{A}$.
	\end{algorithmic}
\end{algorithm}

\begin{algorithm}[t]
	\caption{Bonus for C-UCBVI}
	\label{algo:bonus}
	\begin{algorithmic}[1]
		\STATE {\bfseries Input:} $\delta > 0$,$N_k(s,\mathbf{z})$, $L = \log(5SHKZT/\delta)$.
		\STATE $b = 7HL\sqrt{\frac{S}{N_k(s,\mathbf{z})}}$.
		\STATE {\bfseries Output:} $b$.
	\end{algorithmic}
\end{algorithm}

\begin{thm} \label{thm:CUCBVI}
	With probability $\geq 1-\delta$, the regret of C-UCBVI (Algorithm~\ref{algo:CUCBVI}) is bounded by:
	\begin{align}
	%R_K = \tilde{O}\left(H^2S\sqrt{|\mathcal{Z}|T}+H^2S^2|\mathcal{Z}|\right),
	R_K = \tilde{O}\left(HS\sqrt{ZT}\right). \label{equ:CUCBVI}
	\end{align}
	We omit small order terms that do not depend on $T=KH$.
	
%	\textcolor{red}{Replacing the bonus term by $b(x,\mathbf{Z}) = 7HL\sqrt{\frac{S}{N_k(x,\mathbf{Z})}}$ and using Weissman theorem on the estimation error of transition probabilities (1-norm), one can achieve final regret $R_K = \tilde{O}\left(HS\sqrt{|\mathcal{Z}|T}\right)$}
\end{thm}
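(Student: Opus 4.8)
The plan is to adapt the optimism-based analysis of UCBVI to the causal setting, exploiting the fact that the only learned quantities are the reduced transition kernels $\mathbb{P}(\cdot\mid s,\mathbf{z})$ living on the $SZ$-sized space $\mathcal{S}\times\mathcal{Z}$, while the mixing weights $P(\mathbf{z}\mid s,a)$ and the rewards are known exactly. First I would define a \emph{good event} $\mathcal{E}$ on which the empirical kernels concentrate: for every episode $k$, step $h$, and pair $(s,\mathbf{z})$ with $N_k(s,\mathbf{z})>0$, and every value function $V\in[0,H]^{\mathcal{S}}$ arising in the algorithm, $|(\hat{\mathbb{P}}_k-\mathbb{P})V(s,\mathbf{z})|\le HL\sqrt{S/N_k(s,\mathbf{z})}$. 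This follows from an $L_1$-deviation (Weissman-type) bound on $\|\hat{\mathbb{P}}_k(\cdot\mid s,\mathbf{z})-\mathbb{P}(\cdot\mid s,\mathbf{z})\|_1$ together with a union bound over the $SZ$ pairs, the $K$ episodes, and the $H$ steps; the $\sqrt{S}$ inside the bonus $b=7HL\sqrt{S/N_k(s,\mathbf{z})}$ is precisely what accounts for the $S$ possible next states. Choosing $L=\log(5SHKZT/\delta)$ makes $\mathbb{P}(\mathcal{E})\ge 1-\delta$.

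Second, I would establish optimism on $\mathcal{E}$ by backward induction on $h$: $q_{k,h}(s,\mathbf{z})\ge q_h^{*}(s,\mathbf{z})$, hence, since the weights $P(\mathbf{z}\mid s,a)$ are known and identical in both expansions, $Q_{k,h}(s,a)\ge Q_h^{*}(s,a)$ and $V_{k,h}(s)\ge V_h^{*}(s)$. The inductive step compares $\hat{\mathbb{P}}_kV_{k,h+1}$ with $\mathbb{P}V_{h+1}^{*}$, splits it as $(\hat{\mathbb{P}}_k-\mathbb{P})V_{k,h+1}+\mathbb{P}(V_{k,h+1}-V_{h+1}^{*})$, applies the induction hypothesis to the second term and the bonus from the good event to dominate the first; the clipping $\min(H,\cdot)$ and the $q_{k,h}=H$ branch for unvisited pairs preserve optimism trivially since all values lie in $[0,H]$.

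Third comes the regret decomposition, which is the core of the argument. Writing $\Delta_{k,h}:=V_{k,h}(s_{k,h})-V_h^{\pi_k}(s_{k,h})$ and using optimism to bound $R_K\le\sum_k\Delta_{k,1}$, I would expand both value functions at the greedy action $a_{k,h}$ using the \emph{same} known weights, so that all $A$- and $Z$-dependence from the mixing cancels and $\Delta_{k,h}=\sum_{\mathbf{z}}P(\mathbf{z}\mid s_{k,h},a_{k,h})\,[q_{k,h}(s_{k,h},\mathbf{z})-q_h^{\pi_k}(s_{k,h},\mathbf{z})]$. On $\mathcal{E}$ the per-$\mathbf{z}$ difference is at most $2b_{k,h}(s_{k,h},\mathbf{z})+\mathbb{P}(V_{k,h+1}-V_{h+1}^{\pi_k})(s_{k,h},\mathbf{z})$. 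Taking the expectation over the realized draw $\mathbf{z}_{k,h}\sim P(\cdot\mid s_{k,h},a_{k,h})$ followed by $s_{k,h+1}\sim\mathbb{P}(\cdot\mid s_{k,h},\mathbf{z}_{k,h})$ turns the second term into $\Delta_{k,h+1}$ up to a martingale difference, and replaces the expected bonus by the realized $2b_{k,h}(s_{k,h},\mathbf{z}_{k,h})$ up to another martingale difference. Unrolling over $h$ and summing over $k$ gives $R_K\lesssim\sum_{k,h}2b_{k,h}(s_{k,h},\mathbf{z}_{k,h})+(\text{two martingale sums})$, the latter being $\tilde{O}(H\sqrt{T})$ by Azuma--Hoeffding, a lower-order term.

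Finally, substituting $b_{k,h}=7HL\sqrt{S/N_k(s_{k,h},\mathbf{z}_{k,h})}$ reduces the leading term to $\sum_{k,h}N_k(s_{k,h},\mathbf{z}_{k,h})^{-1/2}$, which a pigeonhole/telescoping argument over the $SZ$ distinct pairs bounds by $O(\sqrt{SZ\,T})$ via Cauchy--Schwarz (using $\sum_{(s,\mathbf{z})}\sqrt{n_{s,\mathbf{z}}}\le\sqrt{SZ\sum_{(s,\mathbf{z})}n_{s,\mathbf{z}}}=\sqrt{SZT}$); multiplying by the $HL\sqrt{S}$ prefactor yields $\tilde{O}(HS\sqrt{ZT})$. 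I expect the main obstacle to be the decomposition step: making rigorous that the known weights $P(\mathbf{z}\mid s,a)$ let all confidence sets live on the $SZ$-sized $(s,\mathbf{z})$ space (so that no $\sqrt{A}$ factor ever enters), while correctly setting up the two-stage filtration ($\mathbf{z}_{k,h}$ then $s_{k,h+1}$) so that both the value-gap propagation and the expected-to-realized bonus replacement are genuine martingale differences; a secondary technical point is the standard subtlety that $N_k$ is updated only between episodes, handled as in \citet{azar2017minimax} by relating within-episode counts to end-of-episode counts at the cost of lower-order terms.
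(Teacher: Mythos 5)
Your proposal is correct and follows essentially the same route as the paper's proof: optimism via a Weissman-type $L_1$ concentration bound on the empirical kernels indexed by $(s,\mathbf{z})$, a per-step recursion that splits the gap $V_{k,h}-V_h^{\pi_k}$ into bonus terms, kernel-estimation errors, and martingale differences, an Azuma step to convert the $P(\mathbf{z}\mid s,a)$-weighted expected bonuses into realized ones, and a pigeonhole plus Cauchy--Schwarz argument over the $SZ$ pairs yielding the $HS\sqrt{ZT}$ leading term. The only differences are cosmetic (you track the gap through the $q$-difference at the greedy action rather than through the marginalized operator recursion, and your per-step constant is $2b_{k,h}$ where the paper's is $3b_{k,h}$), neither of which affects the result.
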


%\textcolor{red}{Add more conclusion here.}

\section{Causal Factored-UCBVI} \label{sec:CF}
In this section, we study causal factored MDPs, where states $s\in\mathcal{S}\subset\mathbb{R}^m$ can be factorized into scopes. 
We propose causal factored UCBVI (CF-UCBVI) in Algorithm~\ref{algo:CFUCBVI} whose structure is similar to C-UCBVI. 
We mainly discuss their differences in this section. 

CF-UCBVI builds on C-UCBVI in terms of incorporating causal graph with the UCB value iteration idea.
It calls CF-UCB-Q-values (Algorithm~\ref{algo:CUCBQval_factored}) which returns upper confidence bounds on the Q-values, however, we construct the UCB bonus terms differently.
Since we have prior knowledge of states factorization scopes, Algorithm~\ref{algo:CUCBQval_factored} no longer needs to directly estimate $\mathbb{P}(s'|s,\mathbf{z})$ by counts.
Instead, at every episode $k$, we estimate the transitions in each scope $\mathbb{P}_i(s'[i]|s[I_i],\mathbf{z})$ by $\hat{\mathbb{P}}_{k,i}(s'[i]|s[I_i],\mathbf{z})$ (see full definition in Algorithm~\ref{algo:CUCBQval_factored}).
Using these scope-wise estimates, $\mathbb{P}(s'|s,\mathbf{z})$ can be estimated by $\prod_{i=1}^m\hat{\mathbb{P}}_{k,i}(s'[i]|s[I_i],\mathbf{z})$.
We calculate the confidence bonus terms for every visited $(s[I_i],\mathbf{z})$ pair according to Algorithm~\ref{algo:bonus_factored}.
The remaining procedures are quite similar to C-UCBVI. 
We update UCBs on $q$-values computed by value iteration using an empirical Bellman operator and the confidence bonus terms. 
See details in Algorithm~\ref{algo:CUCBQval_factored}.

\begin{algorithm}[t]
	\caption{CF-UCBVI}
	\label{algo:CFUCBVI}
	\begin{algorithmic}[1]
		\STATE {\bfseries Input:} action set $\mathcal{A}$, states $\mathcal{S}$ and its scopes $I_1^P,\ldots,I_m^P$, identity of parent variables $\mathbf{Z}$.
		\STATE Initialize data $\mathcal{H} = \phi$, $Q_{k,h}(s,a) = H$ for all $k,h,s,a$.
		\FOR{episode $k=1,\ldots,K$}
		%\STATE $Q_{k,h}(x,\text{Pa}_Y=\mathbf{Z}) = \text{C-UCB-Q-values}(\mathcal{H})$.
		\FOR{step $h = 1,\ldots,H$}
		\STATE Take action $a_{k,h} = \argmax_{a\in\mathcal{A}} Q_{k,h}(s,a)$ and observe values of parent variables $\mathbf{z}_{k,h}$.
		\STATE Update $\mathcal{H} = \mathcal{H}\cup (s_{k,h},a_{k,h},\mathbf{z}_{k,h},s_{k,h+1})$.
		\ENDFOR
		\STATE $Q_{k,h}(s,a) = \text{C-UCB-factored-Q-values}(\mathcal{H})$.
		\ENDFOR
	\end{algorithmic}
\end{algorithm}

\begin{algorithm}[t]
	\caption{CF-UCB-Q-values}
	\label{algo:CUCBQval_factored}
	\begin{algorithmic}[1]
	\STATE {\bfseries Input:} Bonus algorithm, Data $\mathcal{H}$.
		%\FOR{$(x,\mathbf{Z},y)\in\mathcal{S}\times\mathcal{Z}\times\mathcal{S}$}
		\FOR{$i=1,\ldots,m$}
		\STATE $N_k(s[I_i],\mathbf{z},y[i]) = \sum_{(s'[I_i],\mathbf{z}',y'[i])}\mathbb{1}(s'[I_i]=s[I_i],\mathbf{z}'=\mathbf{z},y'[i]=y[i])$, for $(s[I_i],\mathbf{z},y[i])\in\mathcal{S}[I_i]\times\mathcal{Z}\times\mathcal{S}_i$.
		\STATE $N_k(s[I_i],\mathbf{z}) = \sum_{y[i]\in\mathcal{S}_i}N_k(s[I_i],\mathbf{z},y[i])$, for $(s[I_i],\mathbf{z})\in\mathcal{S}[I_i]\times\mathcal{Z}$.
		\STATE Let $\mathcal{K}_i = \{(s[I_i],\mathbf{z})\in \mathcal{S}[I_i]\times\mathcal{Z},N_k(s[I_i],\mathbf{z})>0\}$.
		\STATE Estimate $\hat{\mathbb{P}}_{k,i}(y[i]|s[I_i],\mathbf{z}) = \frac{N_k(s[I_i],\mathbf{z},y[i])}{N_k(s[I_i],\mathbf{z})}$ for all $(s[I_i],\mathbf{z})\in\mathcal{K}_i$.
		\ENDFOR
		\STATE Initialize $V_{k,H+1}(x) = 0$ for all $s\in\mathcal{S}$.
		\FOR{$h=H,H-1,\ldots,1$}
		\FOR{$(s,\mathbf{z})\in\mathcal{S}\times\mathcal{Z}$}
		\FOR{$i=1,\ldots,m$}
		\IF{$(s[I_i],\mathbf{z})\in\mathcal{K}_i$}
		\STATE $b_{k,h}(s[I_i],\mathbf{z}) = \text{bonus}(N_k(s[I_i],\mathbf{z}))$
		\ELSE
		\STATE $q_{k,h}(s,\mathbf{z}) = H$
		\STATE $\textbf{break}$
		\ENDIF
		\ENDFOR
		\STATE $q_{k,h}(s,\mathbf{z}) = \min\{H,\sum_{i=1}^m R(s[J_i],\mathbf{z})+\hat{\mathbb{P}}_kV_{k,h+1}(s,\mathbf{z})+\sum_{i=1}^mb_{k,h}(s[I_i],\mathbf{z})\}$ if $q_{k,h}(s,\mathbf{z})$ has not been assigned a value.
		\ENDFOR
		\FOR{$(s,a)\in\mathcal{S}\times\mathcal{A}$}
		\STATE
		$Q_{k,h}(s,a)=\sum_{\mathbf{z}\in\mathcal{Z}}P(\mathbf{z}|s,a)q_{k,h}(s,\mathbf{z})$
		\ENDFOR
		\STATE $V_{k,h}(s) = \max_{a\in\mathcal{A}} Q_{k,h}(s,a)$
		\ENDFOR
		\STATE {\bfseries Output:} Q-values $Q_{k,h}(s,\mathbf{z})$ for all $(s,\mathbf{z})\in \mathcal{S}\times\mathcal{Z}$.
	\end{algorithmic}
\end{algorithm}

\begin{algorithm}[t]
	\caption{Bonus for CF-UCBVI}
	\label{algo:bonus_factored}
	\begin{algorithmic}[1]
		\STATE {\bfseries Input:} $\delta>0$, $N_k(s[I_i],\mathbf{z})$, $L = \log(5\sum_{i=1}^{m}S[I_i]HKZT/\delta)$.
		\STATE $b = 7HL\sqrt{\frac{|\mathcal{S}_i|}{N_k(s[I_i],\mathbf{z})}}$.
		\STATE {\bfseries Output:} $b$.
	\end{algorithmic}
\end{algorithm}

\begin{thm}[Regret of CF-UCBVI]\label{thm:CFUCBVI}
	With probability $\geq 1-\delta$, the regret of CF-UCBVI is bounded by
	\begin{align*}
		R_K = \tilde{O}\left(H\sum_{i=1}^{m}\sqrt{S_i S[I_i]ZT}\right). \label{equ:CFUCBVI}
	\end{align*}
	We omit small order terms that do not depend on $T=KH$.
\end{thm}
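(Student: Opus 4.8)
The plan is to follow the optimism-based template of the UCBVI analysis, exactly as in the proof of Theorem~\ref{thm:CUCBVI}, and to modify only the two places where the factored structure enters: the per-scope concentration that certifies optimism, and the pigeonhole counting of visitation that produces the final rate. Throughout, the rewards are known and the conditional laws $P(\mathbf{z}\mid s,a)$ are known, so all estimation error is confined to the scope-wise transition kernels $\mathbb{P}_i(\cdot\mid s[I_i],\mathbf{z})$, and the action $a$ enters the Bellman recursion only through the fixed weights $P(\mathbf{z}\mid s,a)$.

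First I would establish optimism, i.e. $V_{k,h}(s)\ge V_h^*(s)$ for all $k,h,s$ on a high-probability event, by backward induction on $h$. The only new ingredient relative to Theorem~\ref{thm:CUCBVI} is controlling the error of the \emph{product} estimator $\hat{\mathbb{P}}_k(s'\mid s,\mathbf{z})=\prod_{i=1}^m \hat{\mathbb{P}}_{k,i}(s'[i]\mid s[I_i],\mathbf{z})$ against $\mathbb{P}(s'\mid s,\mathbf{z})=\prod_{i=1}^m \mathbb{P}_i(s'[i]\mid s[I_i],\mathbf{z})$. I would use the telescoping identity
\begin{align*}
\prod_{i=1}^m \hat{a}_i-\prod_{i=1}^m a_i=\sum_{j=1}^m\Big(\prod_{i<j}\hat{a}_i\Big)(\hat{a}_j-a_j)\Big(\prod_{i>j}a_i\Big),
\end{align*}
applied inside $(\hat{\mathbb{P}}_k-\mathbb{P})V_{k,h+1}(s,\mathbf{z})$, to reduce the product error to a sum over scopes of single-factor errors of the form $\sum_{s'[j]}(\hat{\mathbb{P}}_{k,j}-\mathbb{P}_j)(s'[j]\mid s[I_j],\mathbf{z})\,W_j(s'[j])$, where $W_j\in[0,H]$ is a value weighted by the remaining factors. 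A per-scope $\ell_1$ concentration bound for the empirical categorical distribution over the $S_j=|\mathcal{S}_j|$ outcomes gives $\|\hat{\mathbb{P}}_{k,j}(\cdot\mid s[I_j],\mathbf{z})-\mathbb{P}_j(\cdot\mid s[I_j],\mathbf{z})\|_1\lesssim L\sqrt{S_j/N_k(s[I_j],\mathbf{z})}$, uniformly over all $V\in[0,H]^{\mathcal S}$ (this uniformity is exactly what the $\sqrt{S_j}$ factor buys, so no union bound over data-dependent value functions is needed). Summing over $j$ shows the total bonus $\sum_i b_{k,h}(s[I_i],\mathbf{z})$ of Algorithm~\ref{algo:bonus_factored} dominates $|(\hat{\mathbb{P}}_k-\mathbb{P})V_{k,h+1}(s,\mathbf{z})|$, which closes the induction.

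Next I would reduce regret to cumulative bonus. Writing $\tilde\delta_{k,h}=V_{k,h}(s_{k,h})-V_h^{\pi_k}(s_{k,h})$ and using $V_{k,h}(s_{k,h})=\sum_{\mathbf z}P(\mathbf z\mid s_{k,h},a_{k,h})q_{k,h}(s_{k,h},\mathbf z)$ together with the analogous identity for $V_h^{\pi_k}$, the same Bellman-error decomposition as above yields
\begin{align*}
\tilde\delta_{k,h}\le 2\sum_{\mathbf z}P(\mathbf z\mid s_{k,h},a_{k,h})\sum_{i=1}^m b_{k,h}(s_{k,h}[I_i],\mathbf z)+\mathbb E[\tilde\delta_{k,h+1}\mid\mathcal F_{k,h}].
\end{align*}
Unrolling over $h$ and introducing two martingale-difference sequences --- one for replacing the conditional expectation of $\tilde\delta_{k,h+1}$ by its realized value, and one for replacing the $\mathbf z$-expected bonus by the realized bonus $\sum_i b_{k,h}(s_{k,h}[I_i],\mathbf z_{k,h})$ --- bounds $R_K\le 2\sum_{k,h}\sum_i b_{k,h}(s_{k,h}[I_i],\mathbf z_{k,h})$ plus Azuma terms of order $\tilde O(H\sqrt{T})$, which are lower order. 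Finally, for each scope $i$ the visited pairs live in $\mathcal S[I_i]\times\mathcal Z$, a set of size $S[I_i]Z$, so the standard Cauchy--Schwarz counting bound gives $\sum_{k,h}1/\sqrt{N_k(s_{k,h}[I_i],\mathbf z_{k,h})}\le O(\sqrt{S[I_i]Z\,T})$; multiplying by the $7HL\sqrt{S_i}$ prefactor and summing over $i$ produces the claimed $\tilde O(H\sum_{i=1}^m\sqrt{S_i S[I_i]ZT})$.

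The main obstacle is the optimism step, specifically controlling the product-kernel estimation error uniformly: the telescoping bound must be combined with a per-scope $\ell_1$ concentration that holds simultaneously for all (data-dependent) value functions $V_{k,h+1}$, and the union bound must range only over the scope pairs $(s[I_i],\mathbf z)\in\mathcal S[I_i]\times\mathcal Z$ and over $N_k$, so that the log factor $L$ and the dimension $S_i$ stay scope-local rather than blowing up to $S$. Bounding the cross-factor terms $\prod_{i<j}\hat{\mathbb P}_{k,i}$ by $1$ (they are genuine probability factors) and verifying that the residual first-visit ($N_k=0$) contributions are lower order are the remaining bookkeeping points.
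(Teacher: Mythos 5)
Your proposal is correct and follows essentially the same route as the paper: optimism by backward induction with a scope-wise $\ell_1$ concentration, a Bellman-error recursion reducing regret to cumulative bonuses plus Azuma martingale terms, and a per-scope pigeonhole/Cauchy--Schwarz count over $\mathcal{S}[I_i]\times\mathcal{Z}$ giving $\sum_i 7HL\sqrt{S_i}\cdot O(\sqrt{S[I_i]ZT})$. Your telescoping identity is just an explicit proof of the product-distribution $\ell_1$ bound $\|\prod_i\hat{\mathbb{P}}_{k,i}-\prod_i\mathbb{P}_i\|_1\le\sum_i\|\hat{\mathbb{P}}_{k,i}-\mathbb{P}_i\|_1$ that the paper imports from \citet{osband2014near}, and the remaining differences (bounding $(\hat{\mathbb{P}}_k-\mathbb{P})V_{k,h+1}$ directly versus splitting off $V_{h+1}^*$, yielding a constant $2$ versus $3$) are immaterial.
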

In Theorem~\ref{thm:CFUCBVI}, the regret bound consists a reduced term $\sum_{i=1}^{m}\sqrt{S_i S[I_i]}$ involving scope-wise state space parameters. 
The bound reduces to \eqref{equ:CUCBVI} when $m=1$, otherwise, it improves the regret exponentially when $\mathcal{S}$ is high-dimensional.

\section{Causal Linear MDPs} \label{sec:CL}
In this section, we consider function approximations on causal MDP dynamics. 
In particular, we show that linear MDP algorithms for non-causal MDPs can be well-incorporated with causal MDPs. 

In linear MDPs, $\mathbb{P}(s'|s,a)$ and $R(s,a)$ are modeled by two linear functions and their corresponding feature functions are assumed to be known~\cite{jin2019provably}.
In causal MDPs, since we already know the identity of parent variables $\mathbf{Z}$ that directly affect the state transition and reward, it is natural to instead model $\mathbb{P}(s'|s,\mathbf{z})$ and $R(s,\mathbf{z})$ via linear functions. 
We formally present the definition below.

\begin{defini} [Causal linear MDP]\label{assump:linear_mod_graph}
	A causal linear MDP is a causal MDP equipped with a feature map $\phi:\mathcal{S}\times \mathcal{Z}\rightarrow \mathbb{R}^{d}$, where there exists $d$ unknown measures $\mathbf{\mu} = (\mu^{(1)},\ldots,\mu^{(d)})$ over $\mathcal{S}$ and an unknown vector $\omega\in\mathbb{R}^{d}$, such that for any $(s,\mathbf{z})\in\mathcal{S}\times \mathcal{Z}$, we have
	\begin{align*}
		\mathbb{P}(s'|s,\mathbf{z}) = \left\langle \phi(s,\mathbf{z}),\mathbf{\mu}(s')\right\rangle
		\text{ and } R(s,\mathbf{z}) = \left\langle \phi(s,\mathbf{z}),\omega\right\rangle.
	\end{align*}
	Without loss of generalization, we assume $\left\|\phi(s,\mathbf{z})\right\|\leq 1$ for all $(s,\mathbf{z})$, and $\max\{\left\|\mu(\mathcal{S})\right\|,\left\|\omega\right\|\}\leq \sqrt{d}$.
\end{defini}

%In linear MDP problems, a crucial property is that Q-functions maintain the linear property when the state transition and reward functions are linear in features over state-action pairs. 
%Under this property, Least-Squares Value Iteration (LSVI) algorithm is proved to achieve polynomial regret.
%We show that under above causal linear MDP assumption, Q-functions are also linear.

One can re-write the state transition probability and the reward function for every $(s,a)\in\mathcal{S}\times\mathcal{A}$ using above features and the unknown linear coefficients in below. 
\begin{align*}
	R(s,a) 
% 	&= \sum_{\mathbf{z}\in\mathcal{Z}}R(s,\mathbf{z})P(\mathbf{z}|s,a) \\
	&=  \langle \sum_{\mathbf{z}\in\mathcal{Z}} P(\mathbf{z}|s,a)\phi(s,\mathbf{z}),\omega\rangle\\
	\mathbb{P}(s'|s,a) 
% 	&= \sum_{\mathbf{z}\in\mathcal{Z}}\mathbb{P}(s'|s,\mathbf{z})P(\mathbf{z}|s,a)\\ 
    &= \langle \sum_{\mathbf{z}\in\mathcal{Z}} P(\mathbf{z}|s,a)\phi(s,\mathbf{z}),\mu(s')\rangle.
\end{align*}
To this point, we demonstrate that linearly modeling the state transition and reward functions using parent variables is a special case of standard linear MDPs where the feature vector for every $(s,a)\in\mathcal{S}\times\mathcal{A}$ is $\psi(s,a)\triangleq\sum_{\mathbf{z}\in\mathcal{Z}} P(\mathbf{z}|s,a)\phi(s,\mathbf{z})$. 
Thus, we can easily extend standard linear MDP algorithm to our causal linear MDP setting. 
For example, applying Least-Squares Value Iteration with UCB (LSVI-UCB)~\citep{jin2019provably} algorithm with features $\psi(s,a)$, one can achieve $\widetilde{O}\left(\sqrt{d^3H^3T}\right)$ regret. 

\section{Experiments} \label{sec:exp}
In this section, we conduct several experiments to validate the theoretical findings of our causal approaches. 
We compare our causal algorithms C-UCBVI and CF-UCBVI with two standard non-causal MDP or factored MDP algorithms: UCBVI~\cite{azar2017minimax} and F-UCBVI~\cite{tian2020towards}.

% \textcolor{red}{need edit}
% We show that C-UCBVI outperforms standard UCBVI algorithm in causal MDP environment.
% Furthermore, when the reward and state transition functions can be factorized into scopes over the state variable, we observe that C-F-UCBVI can further achieve smaller regret comparing to C-UCBVI or F-UCBVI. 

Throughout our simulations, we use a causal factored MDP environment that allows us to compare the performance of all four algorithms.
The state space is consisted of $d_s$-dimensional binary vectors, i.e. $\mathcal{S} = \mathcal{S}_1\times\ldots\times\mathcal{S}_{d_s}$, $\mathcal{S}_i = \{0,1\}$.
There are $n$ manipulable variables $X_1,\ldots,X_n$, taking values from $\{1,\ldots,m\}$, $n$ non-manipulable parent variables of the reward and state variables $Z_1,\ldots,Z_n$, taking values from $\{1,2\}$. 
The reward variable and the state transition variable directly depends on their parent variables $Z_1,\ldots,Z_n$.
In each experiment below, we set $H$ differently and always guarantee that $Z=2^n <= m^n = A$.

\paragraph{Intervention set:} An intervention is denoted by
$a = \text{do}(X_1=i_1,\ldots,X_n=i_n)$, where $i_1,\ldots,i_n\in\{1,\ldots,m\}$.
This means only non-parent variables can be intervened, while the parent variables of the reward are not under control.

\paragraph{Reward Generation. }
We generate reward for every scope-wise state-$\mathbf{Z}$ pair $R(s_i,\mathbf{z})$ uniformly from $[0,1]$.
By factored MDP assumption, we calculate the state-$\mathbf{Z}$ pair rewards by $R(s,\mathbf{z})=\sum_{i=1}^{d_s}R(s_i,\mathbf{z})$ and the state-action pair rewards by $R(s,a) = \sum_{\mathbf{z}\in\mathcal{Z}}R(s,\mathbf{z})P(\mathbf{z}|s,a)$, where $P(\mathbf{z}|s,a)$ quantities are sampled from dirichlet distribution $\text{Dir}(\mathbf{1}_Z)$ for every $(s,a)$ pair. 

\paragraph{State transition. } 
We generate the scope-wise state-Pa-state transition probabilities from Dirichlet distribution $\text{Dir}(\mathbf{1}_2)$.
By factored MDP assumption, we calculate the state-action-state transition probabilities by $\mathbb{P}(s'|s,\mathbf{z}) = \prod_{i=1}^{d_s}\mathbb{P}(s_i'|s_i,\mathbf{z})$.
In this example, $I_i = \{i\}, i=1,\ldots,d_s$.

\paragraph{Experiment 1:} 
We begin with a simple case where $m=4$ and $n = 3$. 
In this setting, we set the horizon $H = 5$, the dimension of state variable $d_s = 3$ and compare the performance of all four algorithms: UCBVI, C-UCBVI, F-UCBVI and CF-UCBVI over $K = 5000$ episodes.
We repeat every algorithm for $10$ times and calculate the averaged regrets and their $1$-standard deviation confidence intervals at every episode. 
Regret comparison plot is displayed in Figure~\ref{fig:fixed}.

In this causal factored MDP environment, the regret plot shows that the only algorithm that uses causal knowledge and factored state space structure: CF-UCBVI outperforms other three algorithms while UCBVI has the highest regret. 
C-UCBVI and F-UCBVI use one of the structure properties, so their regret curves lie in the middle. 
It is hard to compare C-UCBVI and F-UCBVI. 
In general, when the causal relations are stronger than factored state structure relations, C-UCBVI outperforms F-UCBVI and vice versa. 
In this environment, it happens that C-UCBVI performs better.

\begin{figure}[t]
	\centering
	\includegraphics[width=.35\textwidth]{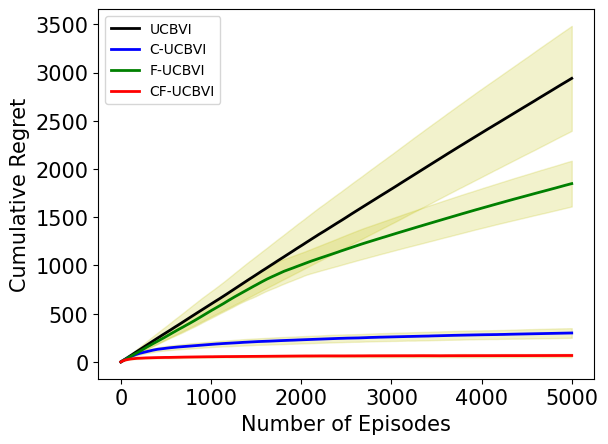}
	\caption{Cumulative regret v.s. number of episodes. $m = n = 3$, $H = 5$. We plot the averated cumulative regret in red, blue, green and black curves, and $1$-standard deviation for each method within the yellow shadow area.}
	\label{fig:fixed}
\end{figure}

\paragraph{Experiment 2: $m=3,4,5,6,7; n=3$.}
In this experiment, we fix $n=3$ while changing the domain range of non-parent variables $m$ from $3$ to $7$.
The number of interventions increases exponentially as $m$ increases, however, the number of parent variables value assignments $Z$ does not vary.
For each algorithm, the cumulative regret after $K = 5000$ episodes is averaged over $10$ simulations. 
Regret comparison plot is displayed in Figure~\ref{fig:X_max}.

As we increase the number of interventions, the regret curves show that the performances of C-UCBVI and CF-UCBVI are stable. 
The other two algorithms incur higher regrets when the intervention size becomes bigger.
At every fixed $m$ value, the performance rank is the same as Figure~\ref{fig:fixed}.

\begin{figure}[t]
	\centering
	\includegraphics[width=.35\textwidth]{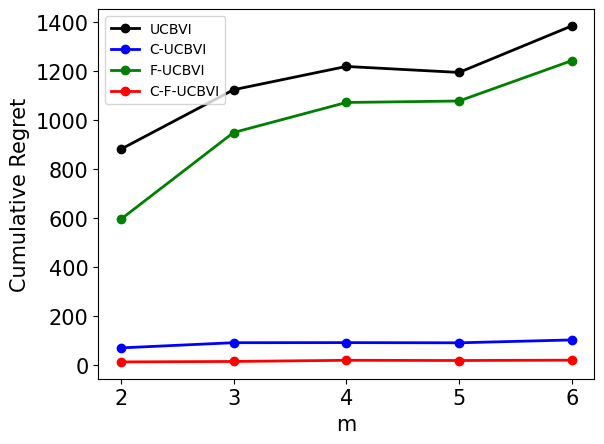}
	\caption{Cumulative regret v.s. $m$, $n=3$ fixed, $H = 2$, number of episodes $K=5000$.}
	\label{fig:X_max}
\end{figure}

\paragraph{Experiment 3: vary state dimension $d_s=2,3,4,5$.}
In this section, we fix $m = 3$ and $n = 3$ and compare three algorithms: C-UCBVI, F-UCBVI and CF-UCBVI across different state dimension settings: $d_s = 2,3,4,5$.
We set $K = 5000$ and repeat every algorithm for $10$ times and compute the final averaged regret.
We do not plot the regret curve for UCBVI because it does not converge until the end of $5000$ episodes and thus the cumulative regret v.s. state dimension curve for UCBVI cannot reflect the true relation between $d_s$ and the regret of UCBVI.
We can observe this phenomenon in Figure~\ref{fig:fixed} where $d_s$ is only $2$ and the UCBVI curve (in black) is almost straight up to $K = 5000$.
Since we increase $d_s$ from $2$ to $5$ in this experiment, UCBVI converges even slower.
Thus, we present the regret comparison among remaining three algorithms in Figure~\ref{fig:d_s}.

We observe that the regrets of F-UCBVI and CF-UCBVI algorithm do not vary too much, however, the regret of C-UCBVI increases significantly as $d_s$ increases.
This phenomenon matches with our theories. 
C-UCBVI is the only algorithm out of the three who does not exploit the factored MDP environment, so its performance is the most sensitive to $d_s$.
Due to the ignorance of causal knowledge in F-UCBVI, the regret curve of F-UCBVI stays at a higher value comparing to the other two methods.
\begin{figure}[t]
	\label{fig:fac_d_s}
	\centering
	\includegraphics[width=.35\textwidth]{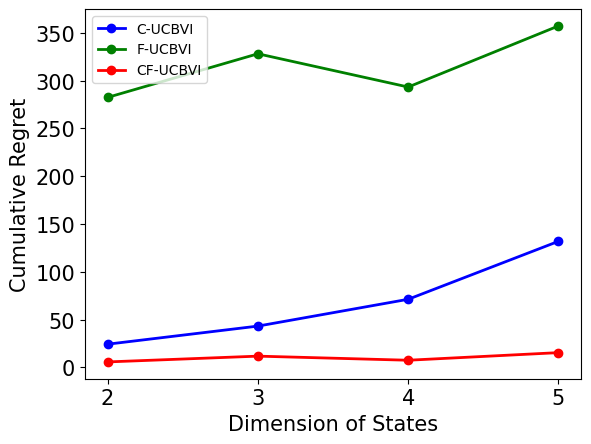}
	\caption{Cumulative regret v.s. $d=2,3,4,5$, fix $m=2,n=3$, horizon $H = 2$, number of episodes $K=5000$.}
	\label{fig:d_s}
\end{figure}

% Above three experiments show the benefits of our causal approaches and validate our theoretical results.

\section{Discussion} \label{sec:dis}
In this paper, we studied the causal (factored) MDPs. 
We proposed C-UCBVI and CF-UCBVI algorithms for the causal and causal factored MDP settings. 
Their regret bounds offer potentially exponential improvements over that of standard RL algorithms.
In addition, we extended the causal MDP problem to its linear MDP variation.

There are several interesting directions we left for future work.
First, we note that our approach can be easily adapted to an action hybrid setting, where some actions $\mathcal{A}_1$ lead to factored structure together with states and others $\mathcal{A}_2$ do not factorize with states, instead form a causal graph with small cardinality of total combinations for \emph{key} or parent variables. 
One can combine F-UCBVI and CF-UCBVI by separately estimating the two types of state transition probabilities $\mathbb{P}(s'|s,a)$ where $a\in\mathcal{A}_1$ and $a\in\mathcal{A}_2$ using factored MDP techniques and our causal approach.

Secondly, our causal algorithms need background knowledge of certain conditional probabilities associated with the causal graphs. 
It will be promising to develop a causal algorithm that can learn the causal information and the MDP environment simultaneously and achieve lower regret than standard non-causal RL algorithms.

\section*{ACKNOWLEDGEMENT}
This work was supported in part by NSF CAREER grant IIS-1452099 and an Adobe Data Science Research Award.

\newpage
\bibliographystyle{apalike}
\bibliography{ref}

\newpage
\onecolumn
\appendix
\section{Proof for Theorem~\ref{thm:CUCBVI}}
\subsection{Main Proof}
\begin{proof}

The regret of C-UCBVI up to episode $K$ is:
\begin{align*}
	R_K = \sum_{k=1}^{K}\left(V_1^*(s_{k,1})-V_1^{\pi_k}(s_{k,1})\right).
\end{align*}
Define the value function difference terms at level $h$ in episode $k$ by $\Delta_{k,h}:=V_h^*-V_h^{\pi_k}$ and $\widetilde{\Delta}_{k,h}:=V_{k,h}-V_h^{\pi_k}$. Define their realizations at state $s_{k,h}$ by $\delta_{k,h}:=\Delta_{k,h}(s_{k,h})$ and $\widetilde{\delta}_{k,h}:=\widetilde{\Delta}_{k,h}(s_{k,h})$. Under these notations, the regret can be written as:
\begin{align*}
	R_K = \sum_{k=1}^{K}\delta_{k,1}.
\end{align*}

At first step, we show that the optimism property, i.e. $V_{k,h}(s)$ term upper bound the optimal value function $V_h^*(s)$ for all $(k,h,s)$ tuples, holds with high probability. Since we have no knowledge on the optimal value functions appearing as a key term in regret, it is hard to directly bound terms $\delta_{k,1}$. Lemma~\ref{lemma:optimism} shows that we can instead bound $\tilde{\delta}_{k,1}$, which depends mostly on the information provided by the algorithm.
\begin{lemma} [Optimism]\label{lemma:optimism}
	Define optimism events as follows:
	\begin{align*}
	\mathcal{E}=\{V_{k,h}(s)\geq V_h^*(s), \forall k,h,s\},
	\end{align*}
	we have $P(\mathcal{E})\geq 1-\delta$.
\end{lemma}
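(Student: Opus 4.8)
The plan is to prove $P(\mathcal{E}) \ge 1-\delta$ by a backward induction on $h$ carried out on a high-probability event on which the empirical kernels concentrate. I would first define the good event
\[
\mathcal{G} = \Big\{ (\mathbb{P} - \hat{\mathbb{P}}_k)V_{h+1}^*(s,\mathbf{z}) \le b_{k,h}(s,\mathbf{z}) \text{ for all } k, h, \text{ and all } (s,\mathbf{z}) \text{ with } N_k(s,\mathbf{z}) > 0 \Big\},
\]
where I write $(\mathbb{P}V)(s,\mathbf{z}) := \sum_{s'\in\mathcal{S}}\mathbb{P}(s'|s,\mathbf{z})V(s')$ and $V_{h+1}^*$ is the optimal value function, which is \emph{deterministic}. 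The strategy is to show $P(\mathcal{G}) \ge 1-\delta$ and then argue that $\mathcal{G} \subseteq \mathcal{E}$ holds deterministically, which gives the claim.

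For the concentration step, fix a pair $(s,\mathbf{z})$. By the Markov property, conditioned on the visits to $(s,\mathbf{z})$ the recorded next states are i.i.d.\ draws from $\mathbb{P}(\cdot\,|\,s,\mathbf{z})$, so $\hat{\mathbb{P}}_k(\cdot\,|\,s,\mathbf{z})$ is the empirical distribution of $N_k(s,\mathbf{z})$ such draws. Since $V_{h+1}^*$ is a fixed vector with entries in $[0,H]$, I would bound
\[
(\mathbb{P} - \hat{\mathbb{P}}_k)V_{h+1}^*(s,\mathbf{z}) \le \|V_{h+1}^*\|_\infty \, \big\| \mathbb{P}(\cdot\,|\,s,\mathbf{z}) - \hat{\mathbb{P}}_k(\cdot\,|\,s,\mathbf{z}) \big\|_1 \le H \big\| \mathbb{P}(\cdot\,|\,s,\mathbf{z}) - \hat{\mathbb{P}}_k(\cdot\,|\,s,\mathbf{z}) \big\|_1,
\]
and then invoke the standard $\ell_1$ deviation bound for empirical distributions over an $S$-element alphabet (Weissman et al.), which yields $\| \mathbb{P} - \hat{\mathbb{P}}_k \|_1 \lesssim \sqrt{S\,L / N_k(s,\mathbf{z})}$ with the prescribed confidence. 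This is exactly the $\sqrt{S}$ scaling built into $b_{k,h} = 7HL\sqrt{S/N_k(s,\mathbf{z})}$. A union bound over the $SZ$ pairs $(s,\mathbf{z})$, the horizon index $h$, and the at most $T$ possible values of the visit count $N_k(s,\mathbf{z})$ (which makes the bound hold simultaneously for all episodes $k$) is absorbed into $L = \log(5SHKZT/\delta)$, giving $P(\mathcal{G}) \ge 1-\delta$.

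On $\mathcal{G}$ I would run the backward induction, writing $q_h^* := q_h^{\pi^*}$. The base case $h = H+1$ is immediate since $V_{k,H+1} \equiv 0 \equiv V_{H+1}^*$. For the inductive step, assume $V_{k,h+1} \ge V_{h+1}^*$ pointwise and first compare $q$-values. For an unvisited pair the algorithm sets $q_{k,h}(s,\mathbf{z}) = H \ge q_h^*(s,\mathbf{z})$, since rewards in $[0,1]$ force $q_h^* \le H$. For a visited pair, using the Bellman identity $q_h^*(s,\mathbf{z}) = R(s,\mathbf{z}) + (\mathbb{P}V_{h+1}^*)(s,\mathbf{z})$ I would decompose
\[
\hat{\mathbb{P}}_k V_{k,h+1}(s,\mathbf{z}) + b_{k,h}(s,\mathbf{z}) - (\mathbb{P}V_{h+1}^*)(s,\mathbf{z}) = \hat{\mathbb{P}}_k(V_{k,h+1} - V_{h+1}^*)(s,\mathbf{z}) + \big( b_{k,h}(s,\mathbf{z}) - (\mathbb{P} - \hat{\mathbb{P}}_k)V_{h+1}^*(s,\mathbf{z}) \big).
\]
The first term is $\ge 0$ because $\hat{\mathbb{P}}_k(\cdot\,|\,s,\mathbf{z})$ is a probability distribution and $V_{k,h+1} \ge V_{h+1}^*$ by the inductive hypothesis; the second is $\ge 0$ on $\mathcal{G}$ by definition. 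Hence $R + \hat{\mathbb{P}}_k V_{k,h+1} + b_{k,h} \ge q_h^*$, and since $q_h^* \le H$ the truncation is harmless: $q_{k,h} = \min(H, \cdot) \ge q_h^*$. Lifting to $Q$ and $V$, because the weights $P(\mathbf{z}\,|\,s,a)$ are nonnegative and sum to one,
\[
Q_{k,h}(s,a) = \sum_{\mathbf{z}\in\mathcal{Z}} P(\mathbf{z}\,|\,s,a)\, q_{k,h}(s,\mathbf{z}) \ge \sum_{\mathbf{z}\in\mathcal{Z}} P(\mathbf{z}\,|\,s,a)\, q_h^*(s,\mathbf{z}) = Q_h^*(s,a),
\]
and taking the maximum over $a$ gives $V_{k,h}(s) \ge \max_a Q_h^*(s,a) = V_h^*(s)$, completing the induction. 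I expect the main obstacle to be making the concentration uniform over episodes: $\hat{\mathbb{P}}_k$ is data-dependent and the visit counts are random, so the argument must union over all possible values of $N_k(s,\mathbf{z})$ rather than over episodes directly. The saving grace — and the reason a plain $\ell_1$/Hoeffding bound suffices here without a covering net over value functions — is that in this lemma we only integrate against the fixed $V_{h+1}^*$, not against the data-dependent $V_{k,h+1}$ (the latter is what forces the more delicate arguments in the regret decomposition that follows).
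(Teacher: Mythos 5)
Your proposal is correct and follows essentially the same route as the paper: a high-probability $\ell_1$ concentration bound for $\hat{\mathbb{P}}_k(\cdot|s,\mathbf{z})$ tested against the fixed vector $V_{h+1}^*$ (the paper's Lemma~\ref{lemma:claimUCB}, via the Weissman-type bound from \citet{osband2014near}), followed by backward induction on $h$ through the $q$-values, lifting to $Q$ via the nonnegative convex weights $P(\mathbf{z}|s,a)$ and then to $V$ by maximizing over actions. The only cosmetic differences are that you start the induction at $h=H+1$ rather than $h=H$ and handle uniformity over episodes by a union bound over possible visit counts rather than over $k$ with $\delta_k' = \delta/2SZk^2$; both are standard and equivalent.
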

This can be proved by backward induction over $h$ for every $k$ such that $\{q_{k,h}(s,\mathbf{z})\geq q_h^*(s,\mathbf{z}),\forall k,h,s,\mathbf{z}\}$ holds with high probability, where $q_h^*(s,\mathbf{z})\overset{\mathrm{def}}{=}\max_\pi q_h^\pi(s,\mathbf{z})$.
By the construction way of $Q_{k,h}$ functions in Algorithm~\ref{algo:CUCBVI}, Lemma~\ref{lemma:optimism} can then be proved using definitions of value functions $q,Q$ and $V$. See proof details in Section~\ref{app_sec:cucbvi_opt_proof}.

According to Lemma~\ref{lemma:optimism}, the regret is upper bounded by:
\begin{align}
R_K = \sum_{k=1}^{K}\delta_{k,1} \leq \sum_{k=1}^{K}\widetilde{\delta}_{k,1},
\end{align}
with probability $1-\delta$.
In order to divide each $\widetilde{\delta}_{k,1}$ into several pieces, we bound every $\widetilde{\delta}_{k,h}$ in a recursion way in terms of $\widetilde{\delta}_{k,h+1}$.
At every episode $k$, level $h$, we define the state transition kernel, weighted bonus and useful martingale terms as follows:
\begin{align*}
	\hat{\mathbb{P}}_{k,h}^{\pi_k}(y|s_{k,h}) &= \sum_{\mathbf{z}\in\mathcal{Z}}\hat{\mathbb{P}}_k(y|s_{k,h},\mathbf{z})P(\mathbf{z}|s_{k,h},\pi_k(s_{k,h},h))\\
	b_{k,h}&:= \sum_{\mathbf{z}}b_{k,h}(s_{k,h},\mathbf{z}) P(\mathbf{z}|s_{k,h},\pi_k(s_{k,h},h))\\
	\epsilon_{k,h} &:= \mathbb{P}_h^{\pi_k}\widetilde{\Delta}_{k,h+1}-\widetilde{\Delta}_{k,h+1}(s_{k,h+1})\\
	\varepsilon_{k,h} &:= \epsilon_{k,h}(s_{k,h}),
\end{align*}
where $b_{k,h}(s_{k,h},\mathbf{z}) = 7HL\sqrt{\frac{S}{N_k(s_{k,h},\mathbf{z})}}$ denotes the output of Algorithm~\ref{algo:bonus} with input $N_k(s_{k,h},\mathbf{z})>0$ and $\delta>0$.
See definition for $L$ (a logarithmic term) in Algorithm~\ref{algo:bonus}.
In particular, we set $\hat{\mathbb{P}}_k(y|s_{k,h},\mathbf{z}) = 0$ and $b_{k,h}(s_{k,h},\mathbf{z}) = H\sqrt{S}$ when $N_k(s_{k,h},\mathbf{z}) = 0$.

Using above definition and procedures in Algorithm~\ref{algo:CUCBVI} and Algorithm~\ref{algo:CUCBQval}, we bound $\tilde{\delta}_{k,h}$ recursively as follows:
\begin{align}
\tilde{\delta}_{k,h}&\leq \widetilde{\delta}_{k,h+1}+\varepsilon_{k,h}+b_{k,h}+\left[(\hat{\mathbb{P}}_{k,h}^{\pi_k}-\mathbb{P}_{h}^{\pi_k})V_{h+1}^*\right](s_{k,h}) \notag\\
&+\left[(\hat{\mathbb{P}}_{k,h}^{\pi_k}-\mathbb{P}_{h}^{\pi_k})(V_{k,h+1}-V_{h+1}^*)\right](s_{k,h}).\label{equ:recursion}
\end{align}
Thus, it remains to bound each term separately in above inequality.

We bound the estimation error term in Claim~\ref{claim:cucbvi_err}.
\begin{claim}\label{claim:cucbvi_err}
	For $\delta>0$, with probability at least $1-\delta$, the error term in~\eqref{equ:recursion} can be bounded by:
	\begin{align*}
		\left|(\hat{\mathbb{P}}_{k,h}^{\pi_k}-\mathbb{P}_{h}^{\pi_k})V_{h+1}^*(s_{k,h})\right| \leq \sum_{\mathbf{z}\in\mathcal{Z}}b_{k,h}(s_{k,h},\mathbf{z})P(\mathbf{z}|s_{k,h},\pi_k(s_{k,h},h)) = b_{k,h}.
	\end{align*}
\end{claim}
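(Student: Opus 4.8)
The plan is to reduce the policy-averaged transition error to a $P(\mathbf{z}\mid s,a)$-weighted sum of per-$\mathbf{z}$ transition errors, and then control each per-$\mathbf{z}$ error by a concentration argument. Writing $a=\pi_k(s_{k,h},h)$, I would first substitute the definitions of $\hat{\mathbb{P}}_{k,h}^{\pi_k}$ and $\mathbb{P}_h^{\pi_k}$ together with the causal factorization $\mathbb{P}(y\mid s,a)=\sum_{\mathbf{z}}\mathbb{P}(y\mid s,\mathbf{z})P(\mathbf{z}\mid s,a)$. Since both kernels are expressed through the same conditional weights $P(\mathbf{z}\mid s_{k,h},a)$, the difference collapses, by linearity, into
\begin{align*}
(\hat{\mathbb{P}}_{k,h}^{\pi_k}-\mathbb{P}_{h}^{\pi_k})V_{h+1}^*(s_{k,h})=\sum_{\mathbf{z}\in\mathcal{Z}}P(\mathbf{z}\mid s_{k,h},a)\,\big[(\hat{\mathbb{P}}_k-\mathbb{P})V_{h+1}^*\big](s_{k,h},\mathbf{z}),
\end{align*}
where $[(\hat{\mathbb{P}}_k-\mathbb{P})V](s,\mathbf{z})=\sum_y(\hat{\mathbb{P}}_k(y\mid s,\mathbf{z})-\mathbb{P}(y\mid s,\mathbf{z}))V(y)$. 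Applying the triangle inequality together with $0\le P(\mathbf{z}\mid s,a)\le 1$ then reduces the claim to showing, for every $\mathbf{z}$ with $N_k(s_{k,h},\mathbf{z})>0$, the per-$\mathbf{z}$ bound $\big|[(\hat{\mathbb{P}}_k-\mathbb{P})V_{h+1}^*](s_{k,h},\mathbf{z})\big|\le b_{k,h}(s_{k,h},\mathbf{z})$.

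For the concentration step I would exploit that $V_{h+1}^*$ is the \emph{optimal} value function, hence fixed and independent of the collected data, with $\|V_{h+1}^*\|_\infty\le H$ because rewards lie in $[0,1]$. Fixing a pair $(s,\mathbf{z})$ and viewing the next states observed at visits to $(s,\mathbf{z})$ as draws from $\mathbb{P}(\cdot\mid s,\mathbf{z})$, I would bound the deviation either by Hoeffding's inequality applied to the scalar $V_{h+1}^*(Y)$, or, to match the $\sqrt{S}$ factor appearing in the bonus, by an $\ell_1$-deviation (Weissman-type) bound $\|\hat{\mathbb{P}}_k(\cdot\mid s,\mathbf{z})-\mathbb{P}(\cdot\mid s,\mathbf{z})\|_1\lesssim\sqrt{S/N_k(s,\mathbf{z})}$ combined with $\big|[(\hat{\mathbb{P}}_k-\mathbb{P})V_{h+1}^*](s,\mathbf{z})\big|\le\|V_{h+1}^*\|_\infty\,\|\hat{\mathbb{P}}_k(\cdot\mid s,\mathbf{z})-\mathbb{P}(\cdot\mid s,\mathbf{z})\|_1$. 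Either route yields a deviation of order $H\sqrt{S/N_k(s,\mathbf{z})}$ up to the logarithmic factor, which is precisely the form of $b_{k,h}(s,\mathbf{z})=7HL\sqrt{S/N_k(s,\mathbf{z})}$; the constant $7$ and $L=\log(5SHKZT/\delta)$ are engineered to absorb the union-bound factors below. The $\sqrt{S}$, strictly unnecessary for this fixed-$V^*$ claim, is the form needed later to control the data-dependent term $(\hat{\mathbb{P}}_{k,h}^{\pi_k}-\mathbb{P}_h^{\pi_k})(V_{k,h+1}-V_{h+1}^*)$ in the recursion \eqref{equ:recursion}, so reusing it here keeps the bonus uniform.

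Finally I would promote the per-$(s,\mathbf{z})$ statement to the stated high-probability guarantee by a union bound over all $s\in\mathcal{S}$, $\mathbf{z}\in\mathcal{Z}$, levels $h\in[H]$ and episodes $k\in[K]$, and reassemble the per-$\mathbf{z}$ bounds through the triangle inequality to obtain $\sum_{\mathbf{z}}P(\mathbf{z}\mid s_{k,h},a)\,b_{k,h}(s_{k,h},\mathbf{z})$, which equals $b_{k,h}$ by the definition of the weighted bonus. I expect the main obstacle to be the concentration step rather than the algebra: the counts $N_k(s,\mathbf{z})$ are random, stopping-time-like quantities, so a clean application of Hoeffding requires treating the observed next states at $(s,\mathbf{z})$ as an i.i.d.\ sequence and adding a union bound over the possible count values $n\le T$ (equivalently, a martingale or peeling argument). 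Tracking these union-bound factors so that the chosen $L$ and the constant $7$ still deliver exactly $b_{k,h}$, while being careful that $V_{h+1}^*$ (unlike $V_{k,h+1}$) carries no data dependence, is where the real care lies.
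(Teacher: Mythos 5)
Your proposal is correct and follows essentially the same route as the paper: decompose the policy-averaged error into a $P(\mathbf{z}\mid s,a)$-weighted sum of per-$\mathbf{z}$ errors, bound each by $\|V_{h+1}^*\|_\infty\,\|\hat{\mathbb{P}}_k(\cdot\mid s,\mathbf{z})-\mathbb{P}(\cdot\mid s,\mathbf{z})\|_1$ together with an $\ell_1$ concentration bound (the paper invokes Lemma 2 of \citet{osband2014near}, which already handles the random-count issue you flag), and then union bound over $k,s,\mathbf{z}$. The only detail you leave out is the case $N_k(s_{k,h},\mathbf{z})=0$, where the paper sets $\hat{\mathbb{P}}_k(\cdot\mid s_{k,h},\mathbf{z})=0$ and $b_{k,h}(s_{k,h},\mathbf{z})=H\sqrt{S}$ so that the per-$\mathbf{z}$ bound holds deterministically from $\|V_{h+1}^*\|_\infty\le H$, which is needed for the displayed sum over all $\mathbf{z}\in\mathcal{Z}$ to equal $b_{k,h}$.
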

\begin{proof}[Proof for Claim~\ref{claim:cucbvi_err}]
	We first use Lemma 2 in~\citet{osband2014near} to show an L1 bound for the empirical transition function. 
	It ensures that for any $s,\mathbf{z}\in\mathcal{S}\times\mathcal{Z}$ such that $N_k(s,\mathbf{z})>0$, we have $\left\|\mathbb{P}(\cdot|s,\mathbf{z})-\hat{\mathbb{P}}_k(\cdot|s,\mathbf{z})\right\|_1\leq\sqrt{\frac{2S}{N_k(s,\mathbf{z})}\log\left(\frac{2}{\delta_k'}\right)}$ with probability at least $1-\delta_k'$. Simply set $\delta_k' = \delta/2SZk^2$, by a union bound over $k,s,\mathbf{z}$ we have 
	\begin{align}
		P\left(\left\|\mathbb{P}(\cdot|s,\mathbf{z})-\hat{\mathbb{P}}_k(\cdot|s,\mathbf{z})\right\|_1\leq\sqrt{\frac{2S}{N_k(s,\mathbf{z})}\log\left(\frac{2}{\delta_k'}\right)}, \forall k\in\mathbb{N}, s\in\mathcal{S},\mathbf{z}\in\mathcal{Z} \text{ s.t. } N_k(s,\mathbf{z})>0\right) \geq 1-\delta. \label{equ:1norm}
	\end{align}
	Next, we bound the estimation error term, which is the main interest of this claim. By writing out the expression and above inequality, with probability at least $1-\delta$, for all $k,h$ we have:
	\begin{align*}
		\left|(\hat{\mathbb{P}}_{k,h}^{\pi_k}-\mathbb{P}_{h}^{\pi_k})V_{h+1}^*(s_{k,h})\right| &= \left|\sum_{y\in\mathcal{S}}\left(\mathbb{P}(y|s_{k,h},\pi_k(s_{k,h},h))-\hat{\mathbb{P}}_k(y|s_{k,h},\pi_k(s_{k,h},h))\right)V_{h+1}^*(y)\right|\\
		& = \left|\sum_{y\in\mathcal{S}}\sum_{\mathbf{z}\in\mathcal{Z}}\left(\mathbb{P}(y|s_{k,h},\mathbf{z})-\hat{\mathbb{P}}_k(y|x_{k,h},\mathbf{z})\right)P(\mathbf{z}|s_{k,h},\pi_k(s_{k,h},h))V_{h+1}^*(y)\right|\\
		&\leq H\sum_{\mathbf{z}\in\mathcal{Z}}P(\mathbf{z}|s_{k,h},\pi_k(s_{k,h},h))\left\|\mathbb{P}(\cdot|s_{k,h},\mathbf{z})-\hat{\mathbb{P}}_k(\cdot|s_{k,h},\mathbf{z})\right\|_1\\
		&\leq H \sum_{\mathbf{z}:N_k(s_{k,h},\mathbf{z})>0}P(\mathbf{z}|s_{k,h},\pi_k(s_{k,h},h)) \sqrt{\frac{2S}{N_k(s_{k,h},\mathbf{z})}\log\left(\frac{2}{\delta_k'}\right)} \text{ from~\eqref{equ:1norm}}\\
		& + H\sum_{\mathbf{z}:N_k(s_{k,h},\mathbf{z})=0} \sqrt{S}P(\mathbf{z}|s_{k,h},\pi_k(s_{k,h},h)) \text{ by Cauchy-Schwarz inequality on 1-norm}\\
		& \leq b_{k,h},
	\end{align*}
	where the last inequality can be seen from the definition of $b_{k,h}$.
\end{proof}

In Claim~\ref{claim:cucbvi_err}, the only information about value function $V$ we use is its upper bound $H$. Thus, under exact the same approach, we bound in higher-order error term as follows: with probability at least $1-\delta$, for all $k,h$ we have
\begin{align*}
	\left|(\hat{\mathbb{P}}_{k,h}^{\pi_k}-\mathbb{P}_{h}^{\pi_k})(V_{k,h+1}-V_{h+1}^*)(x_{k,h})\right| \leq b_{k,h}.
\end{align*}

Combining the recursion in~\eqref{equ:recursion} and above two claims, we have

\begin{align}
	\tilde{\delta}_{k,h} \leq \sum_{j=h}^{H}\varepsilon_{k,j}+3b_{k,j}.\label{equ:recursion_sim}
\end{align}

We bound the summation on bonus terms in Claim~\ref{claim:cucbvi_bonus}.
\begin{claim}\label{claim:cucbvi_bonus}
	For any $\delta$, with probability at least $1-\delta$, we have:
	\begin{align}
		\sum_{k=1}^{K}\sum_{h=1}^{H}b_{k,h} \leq 14HL\sqrt{2STL}+14HSL\sqrt{ZT}+H\sqrt{S}SZ.
	\end{align}
\end{claim}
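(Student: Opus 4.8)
The plan is to bound the double sum $\sum_{k=1}^{K}\sum_{h=1}^{H}b_{k,h}$ by first unpacking the definition of $b_{k,h}$ as a convex combination over $\mathbf{z}$ of the scope bonuses, then converting the sum over the \emph{realized} trajectory into a sum over state-$\mathbf{z}$ counts so that a standard pigeonhole/integral argument applies. Recall $b_{k,h} = \sum_{\mathbf{z}}b_{k,h}(s_{k,h},\mathbf{z})P(\mathbf{z}\mid s_{k,h},\pi_k(s_{k,h},h))$ where $b_{k,h}(s_{k,h},\mathbf{z}) = 7HL\sqrt{S/N_k(s_{k,h},\mathbf{z})}$. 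First I would introduce the indicator $w_{k,h}(\mathbf{z}):=P(\mathbf{z}\mid s_{k,h},\pi_k(s_{k,h},h))$, so the target is $7HL\sqrt{S}\sum_{k,h}\sum_{\mathbf{z}}w_{k,h}(\mathbf{z})/\sqrt{N_k(s_{k,h},\mathbf{z})}$, and separate the contribution of first-visits (where $N_k(s_{k,h},\mathbf{z})=0$ and the bonus is capped at $H\sqrt{S}$) from the rest. The first-visit term contributes at most $H\sqrt{S}\cdot SZ$ since there are at most $SZ$ distinct $(s,\mathbf{z})$ pairs, matching the last additive term in the claim.

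For the main term I would swap the order of summation to group by $(s,\mathbf{z})$ and use the fact that $N_k(s,\mathbf{z})$ increases as episodes progress. The subtlety is that $N_k$ counts \emph{all} past visits to $(s,\mathbf{z})$ across all levels, whereas the martingale-style sum runs over $(k,h)$ pairs weighted by $w_{k,h}(\mathbf{z})$, which is a probability rather than an indicator. The natural move is to introduce the actual realized counts: define the number of times $(s,\mathbf{z})$ was visited (as the parent realization) and bound $\sum_k \mathbb{1}[\text{visit to }(s,\mathbf{z})]/\sqrt{N_k(s,\mathbf{z})} \lesssim \sqrt{N_K(s,\mathbf{z})}$ via the standard inequality $\sum_{n=1}^{N}1/\sqrt{n}\le 2\sqrt{N}$. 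Then I would relate the \emph{expected} weighted sum $\sum_{k,h}w_{k,h}(\mathbf{z})/\sqrt{N_k(s_{k,h},\mathbf{z})}$ to its realized counterpart using a Freedman/Azuma-type concentration bound, picking up an $O(\sqrt{T L})$ deviation term; this is precisely what produces the $14HL\sqrt{2STL}$ summand. Summing $\sqrt{N_K(s,\mathbf{z})}$ over all $(s,\mathbf{z})$ with the Cauchy–Schwarz inequality $\sum_{s,\mathbf{z}}\sqrt{N_K(s,\mathbf{z})} \le \sqrt{SZ\sum_{s,\mathbf{z}}N_K(s,\mathbf{z})} = \sqrt{SZ\cdot T}$ yields the dominant $14HSL\sqrt{ZT}$ term after restoring the $7HL\sqrt{S}$ prefactor.

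The hard part will be making the passage from the probability-weighted sum to the count-based sum fully rigorous, since $w_{k,h}(\mathbf{z})$ introduces a gap between what the algorithm pays in expectation and the realized trajectory counts that control $N_k$. I expect to set up an auxiliary martingale difference sequence whose increments are $w_{k,h}(\mathbf{z})/\sqrt{N_k}$ minus the indicator of the realized $\mathbf{z}$ divided by the same denominator, and invoke a concentration inequality uniformly over $(s,\mathbf{z})$ via a union bound absorbed into $L$. A second technical point is handling the fact that $N_k(s_{k,h},\mathbf{z})$ may be updated within an episode versus only between episodes; I would adopt the convention (consistent with Algorithm~\ref{algo:CUCBQval}) that $N_k$ is fixed throughout episode $k$, which keeps the denominators piecewise constant and makes the telescoping/pigeonhole bound clean. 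Once these two reconciliations are in place, the three summands in the claim follow by collecting the dominant Cauchy–Schwarz term, the concentration deviation term, and the first-visit term respectively.
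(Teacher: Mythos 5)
Your proposal follows essentially the same route as the paper: decompose $b_{k,h}$ over $\mathbf{z}$, split off the $N_k=0$ first-visit contribution, replace the probability weights $P(\mathbf{z}\mid s_{k,h},\pi_k)$ by realized indicators via an Azuma-type martingale argument (giving the $14HL\sqrt{2STL}$ term), and then apply the pigeonhole integral bound plus Cauchy--Schwarz over $(s,\mathbf{z})$ to get $14HSL\sqrt{ZT}$, with the first-visit term giving $H\sqrt{S}SZ$. The only spot where you are slightly looser than the paper is the first-visit term, where the probability weights must also be converted to realized indicators by the same martingale split (the paper's terms $(a')$ and $(b')$) before the ``at most $SZ$ distinct pairs'' count applies; this is exactly the reconciliation you already set up for the main term, so it is a cosmetic rather than substantive gap.
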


\begin{proof}[Proof for Claim~\ref{claim:cucbvi_bonus}]
	\begin{align*}
	\sum_{k=1}^{K}\sum_{h=1}^{H}b_{k,h} &= \sum_{k=1}^{K}\sum_{h=1}^{H}\sum_{\mathbf{z}\in\mathcal{Z}}b_{k,h}(s_{k,h},\mathbf{z}) P(\mathbf{z}|s_{k,h},\pi_k(s_{k,h},h))\\
	& = \sum_{k=1}^{K}\sum_{h=1}^{H}\sum_{\mathbf{z}:N_k(s_{k,h},\mathbf{z})>0}7HL\sqrt{\frac{S}{N_k(s_{k,h},\mathbf{z})}} P(\mathbf{z}|s_{k,h},\pi_k(s_{k,h},h))\\
	& + \sum_{k=1}^{K}\sum_{h=1}^{H}\sum_{\mathbf{z}:N_k(s_{k,h},\mathbf{z})=0} H\sqrt{S} P(\mathbf{z}|s_{k,h},\pi_k(s_{k,h},h))\\
	& = 7HL\sqrt{S}\sum_{k=1}^{K}\sum_{h=1}^{H}\sum_{s\in\mathcal{S}}\sum_{\mathbf{z}:N_k(s,\mathbf{z})>0}P(\mathbf{z}|s,\pi_k(s,h))\sqrt{\frac{1}{N_k(s,\mathbf{z})}}\mathbb{1}_{\{s_{k,h} = s\}}\\
	&+H\sqrt{S} \sum_{k=1}^{K}\sum_{h=1}^{H}\sum_{s\in\mathcal{S}}\sum_{\mathbf{z}:N_k(s,\mathbf{z})=0}P(\mathbf{z}|s,\pi_k(s,h))\mathbb{1}_{\{s_{k,h} = s\}}\\
	& = \underbrace{7HL\sqrt{S}\sum_{k=1}^{K}\sum_{h=1}^{H}\sum_{s\in\mathcal{S}}\sum_{\mathbf{z}:N_k(s,\mathbf{z})>0}\sqrt{\frac{1}{N_k(s,\mathbf{z})}}\mathbb{1}_{\{s_{k,h} =s\}}\left(P(\mathbf{z}|s,\pi_k(s,h))-\mathbb{1}_{\{\mathbf{z}_{k,h}=\mathbf{z}\}}\right)}_{(a)}\\
	&+\underbrace{7HL\sqrt{S}\sum_{k=1}^{K}\sum_{h=1}^{H}\sum_{s\in\mathcal{S}}\sum_{\mathbf{z}:N_k(s,\mathbf{z})>0}\sqrt{\frac{1}{N_k(s,\mathbf{z})}}\mathbb{1}_{\{s_{k,h} = s,\mathbf{z}_{k,h} = \mathbf{z}\}}}_{(b)} \\
	& + \underbrace{H\sqrt{S} \sum_{k=1}^{K}\sum_{h=1}^{H}\sum_{s\in\mathcal{S}}\sum_{\mathbf{z}:N_k(s,\mathbf{z})=0}(P(\mathbf{z}|s,\pi_k(s,h))-\mathbb{1}_{\{\mathbf{z}_{k,h} = \mathbf{z}\}})\mathbb{1}_{\{s_{k,h} = s\}}}_{(a')}\\
	& + \underbrace{H\sqrt{S} \sum_{k=1}^{K}\sum_{h=1}^{H}\sum_{s\in\mathcal{S}}\sum_{\mathbf{z}\in\mathcal{Z}}\mathbb{1}_{\{\mathbf{z}_{k,h} = \mathbf{z}, s_{k,h} = s, N_k(s,\mathbf{z})=0\}}}_{(b')}
	\end{align*}
	We bound term $(a)$ and $(a')$ in above using Azuma inequality, with probability $1-\delta$ we have
	\begin{align*}
	(a) &\leq 7HL\sqrt{S}\sqrt{2T\log\left(\frac{1}{\delta}\right)} \leq 7HL\sqrt{2STL}, \\
	(a') & \leq H\sqrt{S}\sqrt{2T\log\left(\frac{1}{\delta}\right)} \leq 7H\sqrt{2STL}
	\end{align*}
	We bound term $(b)$ using pigeon-hole theorem,
	\begin{align*}
	(b)&\leq 7HL \sqrt{S}\sum_{s\in\mathcal{S}}\sum_{\mathbf{z}\in\mathcal{Z}}\int_{0}^{N_K(s,\mathbf{z})}\sqrt{\frac{1}{x}}dx\\
	& = 7HL \sqrt{S}\sum_{s\in\mathcal{S}}\sum_{\mathbf{Z}\in\mathcal{Z}}2\sqrt{N_K(s,\mathbf{z})}\\
	&\leq 14HLS\sqrt{ZT} \text{ (by Cauchy-Schwarz)}.
	\end{align*}
	We bound term $(b')$ by $H\sqrt{S}SZ$ due to its indicator function's property.
	Combine $(a)$, $(a')$, $(b)$ and $(b')$ we conclude the result.
\end{proof}

Now we bound the summation over $\varepsilon_{k,h}$ terms, which are the only terms in~\eqref{equ:recursion} remaining to bound to this point.
By Azuma inequality we have:
\begin{align*}
\sum_{k=1}^{K}\sum_{h=1}^{H}\varepsilon_{k,h} & = \sum_{k=1}^{K}\sum_{h=1}^{H}\left(\sum_{y\in\mathcal{S}}P(y|s_{k,h},\pi_k(s_{k,h},h))\tilde{\Delta}_{k,h+1}(y)-\tilde{\Delta}_{k,h+1}\left(s_{k,h+1}\right)\right)\leq 2H\sqrt{KH\log\left(\frac{1}{\delta}\right)}\leq 2H\sqrt{TL}.
\end{align*}

Back to \eqref{equ:recursion_sim}, combining above bound with Claim~\ref{claim:cucbvi_bonus}, we have
\begin{align*}
	R_K \leq \sum_{k=1}^{K}\tilde{\delta}_{k,1} \leq \sum_{k=1}^{K}\sum_{h=1}^{H}\varepsilon_{k,h}+3b_{k,h} = \tilde{O}\left(HS\sqrt{ZT}\right),
\end{align*}
with probability at least $1-\delta$. (after replacing original $\delta$ by dividing some constant number $C$.)
We ignore small order terms which do not have non-logarithmic dependency on $T$.

\end{proof}

\subsection{Proof for Lemma~\ref{lemma:optimism}} \label{app_sec:cucbvi_opt_proof}
\begin{proof}
We start from proving below lemma:

\begin{lemma}\label{lemma:claimUCB}
	For any $\delta>0$, with probability at least $1-\delta$, we have:
	\begin{align*}
		\left|\left((\mathbb{P}-\hat{\mathbb{P}}_k)V_{h+1}^*\right)(s,\mathbf{z})\right|\leq b_{k,h}(s,\mathbf{z}).
	\end{align*}
\end{lemma}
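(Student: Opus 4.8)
The goal is to bound the single-state deviation $\bigl|\bigl((\mathbb{P}-\hat{\mathbb{P}}_k)V_{h+1}^*\bigr)(s,\mathbf{z})\bigr|$ by the Hoeffding bonus $b_{k,h}(s,\mathbf{z})=7HL\sqrt{S/N_k(s,\mathbf{z})}$. This is essentially the per-$(s,\mathbf{z})$ version of Claim~\ref{claim:cucbvi_err}, with the policy-weighted sum over $\mathbf{z}$ stripped away, and the same machinery applies.

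\textbf{Plan.} The plan is to reduce the inner product to an $L_1$ deviation of the transition estimate and then invoke the concentration bound already cited in Claim~\ref{claim:cucbvi_err}. First I would write
\begin{align*}
\bigl|\bigl((\mathbb{P}-\hat{\mathbb{P}}_k)V_{h+1}^*\bigr)(s,\mathbf{z})\bigr| &= \left|\sum_{y\in\mathcal{S}}\bigl(\mathbb{P}(y|s,\mathbf{z})-\hat{\mathbb{P}}_k(y|s,\mathbf{z})\bigr)V_{h+1}^*(y)\right| \\
&\leq \bigl\|V_{h+1}^*\bigr\|_\infty \bigl\|\mathbb{P}(\cdot|s,\mathbf{z})-\hat{\mathbb{P}}_k(\cdot|s,\mathbf{z})\bigr\|_1,
\end{align*}
using Hölder's inequality. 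Since rewards lie in $[0,1]$ over a horizon of length $H$, we have $\|V_{h+1}^*\|_\infty\leq H$. Next I would apply the $L_1$ concentration bound (Lemma~2 of \citet{osband2014near}): with the union-bound choice $\delta_k'=\delta/(2SZk^2)$ already made in Claim~\ref{claim:cucbvi_err}, inequality~\eqref{equ:1norm} gives, on the same high-probability event of probability at least $1-\delta$, that for every $k,s,\mathbf{z}$ with $N_k(s,\mathbf{z})>0$,
\begin{align*}
\bigl\|\mathbb{P}(\cdot|s,\mathbf{z})-\hat{\mathbb{P}}_k(\cdot|s,\mathbf{z})\bigr\|_1 \leq \sqrt{\frac{2S}{N_k(s,\mathbf{z})}\log\!\left(\frac{2}{\delta_k'}\right)}.
\end{align*}
Combining the two displays bounds the left-hand side by $H\sqrt{2S\,\log(2/\delta_k')/N_k(s,\mathbf{z})}$, and it remains to check that the logarithmic factor is dominated by the constant $7$ and the term $L=\log(5SHKZT/\delta)$ appearing in $b_{k,h}(s,\mathbf{z})$; that is, $\sqrt{2\log(2/\delta_k')}\leq 7L$ for all $k$, which holds since $\log(2/\delta_k')=\log(4SZk^2/\delta)$ grows only logarithmically in $k$ and is absorbed by the generous constant. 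The case $N_k(s,\mathbf{z})=0$ is handled by the convention that $b_{k,h}(s,\mathbf{z})=H\sqrt{S}$ there, which trivially dominates the deviation since $\|\mathbb{P}-\hat{\mathbb{P}}_k\|_1\leq 2$ and $\|V^*_{h+1}\|_\infty\leq H$.

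\textbf{Main obstacle.} There is no substantive difficulty here—the step is a direct specialization of the argument already given for Claim~\ref{claim:cucbvi_err}, indeed the proof of that claim obtains exactly this per-$\mathbf{z}$ bound as an intermediate step before summing against $P(\mathbf{z}|s_{k,h},\pi_k)$. The only point requiring care is bookkeeping: ensuring the union bound over all $k,s,\mathbf{z}$ (and the implicit fact that $V_{h+1}^*$ is a fixed function, not data-dependent, so that Hölder does not require a further union bound over value functions) is consistent with the failure probability $\delta$ and the definition of $L$, so that the numerical constant $7$ in the bonus genuinely absorbs the logarithmic and $\sqrt{2}$ factors. I would therefore present the reduction to $L_1$ concentration, cite \eqref{equ:1norm} verbatim, and close with the constant-absorption check.
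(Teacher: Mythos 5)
Your proof is correct and follows essentially the same route as the paper's: bound the deviation by $H\,\|\mathbb{P}(\cdot|s,\mathbf{z})-\hat{\mathbb{P}}_k(\cdot|s,\mathbf{z})\|_1$ via H\"older, invoke the union-bounded $L_1$ concentration inequality~\eqref{equ:1norm}, and absorb the logarithmic factor into the constant $7$ and $L$ in the definition of $b_{k,h}(s,\mathbf{z})$; you in fact supply more of the bookkeeping than the paper's terse two-line argument does. (One tiny remark: for $N_k(s,\mathbf{z})=0$ the paper's convention $\hat{\mathbb{P}}_k(\cdot|s,\mathbf{z})=0$ gives $\|\mathbb{P}-\hat{\mathbb{P}}_k\|_1=1$, so the bound $H\sqrt{S}$ holds for all $S\geq 1$ rather than needing your factor of $2$.)
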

\begin{proof}[Proof for Lemma~\ref{lemma:claimUCB}]
	We follow the proof for Claim~\ref{claim:cucbvi_err} and use Inequality~\eqref{equ:1norm} and definition of $b_{k,h}(s,\mathbf{z})$ to get
	\begin{align*}
		&P\left(\left|\left((\mathbb{P}-\hat{\mathbb{P}}_k)V_{h+1}^*\right)(s,\mathbf{z})\right|>b_{k,h}(s,\mathbf{z}),\forall k,h,\mathbf{z}\right) \\
		\leq & P\left(\left\|\mathbb{P}(\cdot|s,\mathbf{z})-\hat{\mathbb{P}}(\cdot|s,\mathbf{z})\right\|_1>7L\sqrt{\frac{S}{N_k(s,\mathbf{z})}},\forall k,h,\mathbf{z} \text{ s.t. }N_k(s,\mathbf{z})>0\right)\leq \delta.
	\end{align*}
\end{proof}

Define $q_h^*(s,\mathbf{z})$ by $\max_\pi q_h^{\pi}(s,\mathbf{z})$, which is the maximal $q$ value that can be achieved by any policy.
We use induction to prove $\{q_{k,h}(s,\mathbf{z})\geq q_h^*(s,\mathbf{z})\}$ holds with high probability.

For $h=H$, $q_{k,H}(s,\mathbf{z}) \geq \min\{H,R(s,\mathbf{z})+b_{k,H}(s,\mathbf{z})\} \geq R(s,\mathbf{z}) = q_H^*(s,\mathbf{z})$. Suppose $q_{k,h'}(s,\mathbf{z})\geq q_{h'}^*(s,\mathbf{z})$ holds for $h' = h+1,\ldots,H$, and we know
\begin{align*}
V_{k,h'}(s)-V_{h'}^*(s) &= \max_a Q_{k,h'}(s,a)-Q_{h'}^*(x,\pi^*(s,h'))\\
&\geq Q_{k,h'}(s,\pi^*(x,h'))-Q_{h'}^*(s,\pi^*(s,h'))\\
& = \sum_{\mathbf{z}\in\mathcal{Z}}P(\mathbf{z}|x,\pi^*(s,h'))\left(q_{k,h'}(s,\mathbf{z})-q_{h'}^*(s,\mathbf{z})\right)\\
&\geq 0 (\text{by induction}).
\end{align*}

Now we show at $h$, $\{q_{k,h}(s,\mathbf{z})\geq q_h^*(s,\mathbf{z})\}$ also holds with high probability.

If $q_{k,h}(s,\mathbf{z}) = H$, it trivially holds. So we consider the case $q_{k,h}(s,\mathbf{z}) = R(s,\mathbf{z})+\left(\hat{\mathbb{P}}_{k}V_{k,h+1}\right)(s,\mathbf{z})+b_{k,h}(s,\mathbf{z})$.
\begin{align*}
q_{k,h}(s,\mathbf{z}) - q_h^*(s,\mathbf{z}) &= \left(\hat{\mathbb{P}}_{k}V_{k,h+1}\right)(s,\mathbf{z})+b_{k,h}(s,\mathbf{z})-\left(\mathbb{P}V_{h+1}^*\right)(s,\mathbf{z})\\
&\geq \left((\hat{\mathbb{P}}_k-\mathbb{P})V_{h+1}^*\right)(s,\mathbf{z})+b_{k,h}(s,\mathbf{z}) \text{ (by induction)}\\
&\geq 0 \text{ (hold with probability at least $1-\delta$ by Lemma~\ref{lemma:claimUCB})}.
\end{align*}

Up to here we show that with probability at least $1-\delta$, $\{q_{k,h}(s,\mathbf{z})\geq q_h^*(s,\mathbf{z})\}$ holds. Using the same argument above we have $\{V_{k,h}(s)\geq V_h^*(s), \forall k,h,s\}$ holds with probability $1-\delta$.

\end{proof}
\section{Proof for Theorem~\ref{thm:CFUCBVI}}
\begin{proof}
Throughout the proof, define $\hat{\mathbb{P}}_k(s'|s,\mathbf{z})\overset{\mathrm{def}}{=} \prod_{i=1}^{m}\hat{\mathbb{P}}_{k,i}(s'[i]|s[I_i],\mathbf{z})$ for all $(s',s,\mathbf{z})\in\mathcal{S}\times\mathcal{S}\times\mathcal{Z}$. 
When $N_k(s[I_i],\mathbf{z}) = 0$, we set $\hat{\mathbb{P}}_{k,i}(s'[i]|s[I_i],\mathbf{z}) = 0$.
We define the scope-wise bonus function by $b_{k,h}(s[I_i],\mathbf{z}) = 7HL\sqrt{\frac{S_i}{N_k(s[I_i],\mathbf{z})}}$ when $N_k(s[I_i],\mathbf{z}) > 0$ in Algorithm~\ref{algo:bonus_factored} and $b_{k,h}(s[I_i],\mathbf{z}) = H\sqrt{S_i}$ when $N_k(s[I_i],\mathbf{z}) = 0$ for $i = 1,\ldots,m$.

The regret of CF-UCBVI up to episode $K$ is:
\begin{align*}
R_K = \sum_{k=1}^{K}\left(V_1^*(s_{k,1})-V_1^{\pi_k}(s_{k,1})\right).
\end{align*}
Define the value function difference terms at level $h$ in episode $k$ by $\Delta_{k,h}:=V_h^*-V_h^{\pi_k}$ and $\widetilde{\Delta}_{k,h}:=V_{k,h}-V_h^{\pi_k}$. Define their realizations at state $s_{k,h}$ by $\delta_{k,h}:=\Delta_{k,h}(s_{k,h})$ and $\widetilde{\delta}_{k,h}:=\widetilde{\Delta}_{k,h}(s_{k,h})$. Under these notations, the regret can be written as:
\begin{align*}
R_K = \sum_{k=1}^{K}\delta_{k,1}.
\end{align*}

Similar to the proof for Theorem~\ref{thm:CUCBVI}, at the first step, we show that the optimism property, i.e. $V_{k,h}(s)$ term upper bound the optimal value function $V_h^*(s)$ for all $(k,h,s)$ tuples, holds with high probability. Since we have no knowledge on the optimal value functions in the regret, it is hard to directly bound terms $\delta_{k,1}$. Lemma~\ref{lemma:factored_optimism} shows that we can instead bound $\tilde{\delta}_{k,1}$, which depends mostly on the information provided by the algorithm.

\begin{lemma}[Optimism (Factored)]\label{lemma:factored_optimism}
	Define optimism events as follows:
	\begin{align*}
		\mathcal{E} = \{V_{k,h}(s)\geq V_h^*(s), \forall k,h,s\},
	\end{align*}
	we have $P(\mathcal{E})\geq 1-\delta$.
\end{lemma}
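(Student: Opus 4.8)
The plan is to prove the optimism property for CF-UCBVI by adapting the backward-induction argument used for Lemma~\ref{lemma:optimism}, with the essential difference being that the empirical transition kernel is now a \emph{product} of scope-wise estimates rather than a single flat estimate. As in the C-UCBVI case, it suffices to show that $\{q_{k,h}(s,\mathbf{z})\geq q_h^*(s,\mathbf{z})\}$ holds for all $k,h,s,\mathbf{z}$ with probability at least $1-\delta$, since the passage from $q$ to $Q$ to $V$ is identical: one writes $V_{k,h'}(s)-V_{h'}^*(s)\geq Q_{k,h'}(s,\pi^*(s,h'))-Q_{h'}^*(s,\pi^*(s,h'))=\sum_{\mathbf{z}}P(\mathbf{z}|s,\pi^*(s,h'))(q_{k,h'}(s,\mathbf{z})-q_{h'}^*(s,\mathbf{z}))\geq 0$ using the $q$-level induction hypothesis. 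The base case $h=H$ is trivial since $q_{k,H}(s,\mathbf{z})\geq R(s,\mathbf{z})=q_H^*(s,\mathbf{z})$.

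The crux, and the step I expect to be the main obstacle, is the analogue of Lemma~\ref{lemma:claimUCB}: I must show that the aggregate bonus $\sum_{i=1}^m b_{k,h}(s[I_i],\mathbf{z})$ dominates the value-weighted error of the \emph{factored} transition estimate, i.e. $|((\mathbb{P}-\hat{\mathbb{P}}_k)V_{h+1}^*)(s,\mathbf{z})|\leq \sum_{i=1}^m b_{k,h}(s[I_i],\mathbf{z})$, where $\hat{\mathbb{P}}_k(s'|s,\mathbf{z})=\prod_{i=1}^m\hat{\mathbb{P}}_{k,i}(s'[i]|s[I_i],\mathbf{z})$. Because the error is in a product of $m$ factors rather than a single distribution, I would first bound the $L_1$ distance between the product kernel and the true product kernel by the sum of the scope-wise $L_1$ distances, using the standard telescoping identity: for product distributions $\prod_i p_i$ and $\prod_i \hat{p}_i$ one has $\|\prod_i p_i - \prod_i \hat{p}_i\|_1 \leq \sum_{i=1}^m \|p_i-\hat{p}_i\|_1$ (here I am implicitly assuming the true transition also factors as $\mathbb{P}(s'|s,\mathbf{z})=\prod_i\mathbb{P}_i(s'[i]|s[I_i],\mathbf{z})$, which holds by the CF-MDP definition). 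Then applying Lemma~2 of \citet{osband2014near} scope-by-scope, with a union bound over $k$, the scopes $i$, and the elements of $\mathcal{S}[I_i]\times\mathcal{Z}$, gives $\|\mathbb{P}_i(\cdot|s[I_i],\mathbf{z})-\hat{\mathbb{P}}_{k,i}(\cdot|s[I_i],\mathbf{z})\|_1\leq\sqrt{\frac{2S_i}{N_k(s[I_i],\mathbf{z})}\log(2/\delta_k')}$ simultaneously, which matches the scope-wise bonus $b_{k,h}(s[I_i],\mathbf{z})=7HL\sqrt{S_i/N_k(s[I_i],\mathbf{z})}$ up to the choice of $L$ and the factor $H$ absorbing $\|V_{h+1}^*\|_\infty\leq H$.

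With this factored concentration bound in hand, the induction step for $q$ goes through exactly as before: in the non-truncated case, $q_{k,h}(s,\mathbf{z})-q_h^*(s,\mathbf{z})=(\hat{\mathbb{P}}_kV_{k,h+1})(s,\mathbf{z})+\sum_i b_{k,h}(s[I_i],\mathbf{z})-(\mathbb{P}V_{h+1}^*)(s,\mathbf{z})\geq ((\hat{\mathbb{P}}_k-\mathbb{P})V_{h+1}^*)(s,\mathbf{z})+\sum_i b_{k,h}(s[I_i],\mathbf{z})\geq 0$, where the first inequality uses the induction hypothesis $V_{k,h+1}\geq V_{h+1}^*$ and the second uses the factored Lemma. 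The one piece of extra care is the handling of the unvisited scopes: if some $(s[I_i],\mathbf{z})\notin\mathcal{K}_i$ then the algorithm sets $q_{k,h}(s,\mathbf{z})=H$ and optimism is immediate, so that branch needs only the trivial bound; this mirrors the $N_k(s,\mathbf{z})=0$ handling in the C-UCBVI proof where the bonus is set to $H\sqrt{S_i}$. The main subtlety to flag is ensuring the union-bound budget and the logarithmic factor $L=\log(5\sum_i S[I_i]HKZT/\delta)$ in Algorithm~\ref{algo:bonus_factored} are large enough to cover the product structure across all $m$ scopes simultaneously; once that bookkeeping is set, the argument is otherwise a direct transcription of the C-UCBVI optimism proof.
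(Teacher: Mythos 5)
Your proposal is correct and follows essentially the same route as the paper: the paper's proof of this lemma also reduces optimism to the scope-wise concentration bound $\bigl|((\mathbb{P}-\hat{\mathbb{P}}_k)V_{h+1}^*)(s,\mathbf{z})\bigr|\leq \sum_{i=1}^m b_{k,h}(s[I_i],\mathbf{z})$ (its Lemma~\ref{lemma:cfucbvi_err}), obtained by passing from the product-kernel $L_1$ error to the sum of per-scope $L_1$ errors (citing Lemma~1 of \citet{osband2014near} for the sub-additivity you derive by telescoping) and a union bound over $(s[I_i],\mathbf{z})$, and then invokes the same backward induction on $q_{k,h}$ as in Lemma~\ref{lemma:optimism}. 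Your explicit treatment of the unvisited-scope branch and the union-bound bookkeeping matches what the paper leaves implicit.
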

\begin{proof}[Proof for Lemma~\ref{lemma:factored_optimism}]
	We start from proving below lemma.
	\begin{lemma}\label{lemma:cfucbvi_err}
		 For any $\delta>0$, with probability $1-\delta$, we have:
		\begin{align*}
			\left|\left(\left(\mathbb{P}-\hat{\mathbb{P}}_k\right)V_{h+1}^*\right)\left(s,\mathbf{z}\right)\right|\leq \sum_{i=1}^{m}b_{k,h}\left(s[I_i],\mathbf{z}\right).
		\end{align*}
	\end{lemma}
\begin{proof}[Proof for Lemma~\ref{lemma:cfucbvi_err}:]
Following the same idea of Claim~\ref{claim:cucbvi_err} by using Lemma 1 and Lemma 2 in~\citet{osband2014near}, we have 
	\begin{align*}
	&P\left(\left|\left(\left(\mathbb{P}-\hat{\mathbb{P}}_k\right)V_{h+1}^*\right)\left(s,\mathbf{z}\right)\right|> \sum_{i=1}^{m}b_{k,h}\left(s[I_i],\mathbf{z}\right)\right)\\
	\leq& P\left(\left\|\mathbb{P}(\cdot|s,\mathbf{z})-\hat{\mathbb{P}}_k(\cdot|s,\mathbf{z})\right\|_1 > \frac{1}{H}\sum_{i=1}^{m} b_{k,h}(s[I_i],\mathbf{z})\right) \text{ (by $|V_{h+1}^*(y)|\leq H$, $\forall y\in\mathcal{S}$)}\\
	\leq& \sum_{i\in [m]} P\left(\left\|\mathbb{P}_i(\cdot|s[I_i],\mathbf{z})-\hat{\mathbb{P}}_{k,i}(\cdot|s[I_i],\mathbf{z})\right\|_1>\frac{1}{H}b_{k,h}(s[I_i],\mathbf{z})\right)\\
	 = & \sum_{i\in [m]: N_k(s[I_i],\mathbf{z})>0} P\left(\left\|\mathbb{P}_i(\cdot|s[I_i],\mathbf{z})-\hat{\mathbb{P}}_{k,i}(\cdot|s[I_i],\mathbf{z})\right\|_1>7L\sqrt{\frac{S_i}{N_k(s[I_i],\mathbf{z})}}\right)\\
	+ & \sum_{i\in [m]: N_k(s[I_i],\mathbf{z}) = 0} P\left(\left\|\mathbb{P}_i(\cdot|s[I_i],\mathbf{z})-\hat{\mathbb{P}}_{k,i}(\cdot|s[I_i],\mathbf{z})\right\|_1>\sqrt{S_i}\right)\\
	\leq& \sum_{i=1}^{m}\frac{\delta}{\sum_{j=1}^{m}S[I_i]Z}S[I_i] Z= \delta \text{ (Union bound over $(s[I_i],\mathbf{z})$)}.
	\end{align*}
\end{proof}
The remaining proof for Lemma~\ref{lemma:factored_optimism} simply follows the proof for Lemma~\ref{lemma:optimism} using Lemma~\ref{lemma:cfucbvi_err}.
\end{proof}

At every episode $k$, level $h$, we define the state transition kernel, weighted bonus and useful martingale terms as follows:
\begin{align*}
\hat{\mathbb{P}}_{k,h}^{\pi_k}(y|s)
&:=\sum_{\mathbf{z}\in\mathcal{Z}}\hat{\mathbb{P}}_k(y|s_{k,h},\mathbf{z})P(\mathbf{z}|s_{k,h},\pi_k(s_{k,h},h))\\
&:=\sum_{\mathbf{z}\in\mathcal{Z}}\prod_{i=1}^{m}\hat{\mathbb{P}}_{k,i}(y|s_{k,h}[I_i],\mathbf{z})P(\mathbf{z}|s_{k,h},\pi_k(s_{k,h},h))\\
b_{k,h}&:= \sum_{\mathbf{z}\in\mathcal{Z}}\sum_{i=1}^{m}b_{k,h}(s_{k,h}[I_i],\mathbf{z}) P(\mathbf{z}|s_{k,h},\pi_k(s_{k,h},h))\\
\epsilon_{k,h} &:= P_h^{\pi_k}\widetilde{\Delta}_{k,h+1}-\widetilde{\Delta}_{k,h+1}(s_{k,h+1})\\
\varepsilon_{k,h} &:= \epsilon_{k,h}(s_{k,h}).
\end{align*}

Using above definition and procedures in Algorithm~\ref{algo:CFUCBVI} and Algorithm~\ref{algo:CUCBQval_factored}, we again bound $\tilde{\delta}_{k,h}$ recursively as follows:
\begin{align}
\tilde{\delta}_{k,h}&\leq \widetilde{\delta}_{k,h+1}+\varepsilon_{k,h}+b_{k,h}+\left[(\hat{\mathbb{P}}_{k,h}^{\pi_k}-\mathbb{P}_{h}^{\pi_k})V_{h+1}^*\right](s_{k,h}) \notag\\
&+\left[(\hat{\mathbb{P}}_{k,h}^{\pi_k}-\mathbb{P}_{h}^{\pi_k})(V_{k,h+1}-V_{h+1}^*)\right](s_{k,h}).\label{equ:factored_recursion}
\end{align}

We bound the estimation error term in above decomposition in Claim~\ref{claim:cfucbvi_err}.

\begin{claim}\label{claim:cfucbvi_err}
	For $\delta>0$, with probability at least $1-\delta$, the error term can be bounded by:
	\begin{align*}
		\left|(\hat{\mathbb{P}}_{k,h}^{\pi_k}-\mathbb{P}_{h}^{\pi_k})V_{h+1}^*(s_{k,h})\right|
		\leq \sum_{\mathbf{z}\in\mathcal{Z}} \sum_{i=1}^{m}b_{k,h}(s[I_i],\mathbf{z})P(\mathbf{z}|s_{k,h},\pi_k(s_{k,h},h)) = b_{k,h}.
	\end{align*}
	
	\begin{proof}
		We have $P\left(\left\|\mathbb{P}(\cdot|s,\mathbf{z})-\hat{\mathbb{P}}_k(\cdot|s,\mathbf{z})\right\|_1 > \sum_{i=1}^{m} \frac{1}{H}b_{k,h}(s[I_i],\mathbf{z})\right)\leq \delta$ from the proof of Lemma~\ref{lemma:cfucbvi_err} (the first inequality to the end), then with probability $>1-\delta$, we have
		\begin{align*}
			\left|(\hat{\mathbb{P}}_{k,h}^{\pi_k}-\mathbb{P}_{h}^{\pi_k})V_{h+1}^*(s_{k,h})\right| &= \left|\sum_{y\in\mathcal{S}}\left(\mathbb{P}(y|s_{k,h},\pi_k(s_{k,h},h))-\hat{\mathbb{P}}_k(y|s_{k,h},\pi_k(s_{k,h},h))\right)V_{h+1}^*(y)\right|\\
			& = \left|\sum_{y\in\mathcal{S}}\sum_{\mathbf{z}\in\mathcal{Z}}\left(\mathbb{P}(y|s_{k,h},\mathbf{z})-\hat{\mathbb{P}}_k(y|s_{k,h},\mathbf{z})\right)P(\mathbf{z}|s_{k,h},\pi_k(s_{k,h},h))V_{h+1}^*(y)\right|\\
			&\leq H\sum_{\mathbf{z}\in\mathcal{Z}}P(\mathbf{z}|s_{k,h},\pi_k(s_{k,h},h))\left\|\mathbb{P}(\cdot|s,\mathbf{z})-\hat{\mathbb{P}}_k(\cdot|s,\mathbf{z})\right\|_1\\
			&\leq H\sum_{\mathbf{z}\in\mathcal{Z}}\sum_{i=1}^mP(\mathbf{z}|s_{k,h},\pi_k(s_{k,h},h)) \frac{1}{H} b_{k,h}(s[I_i],\mathbf{z}) = b_{k,h}.
		\end{align*}
	\end{proof}
\end{claim}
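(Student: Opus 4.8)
The plan is to reduce the value-function error to a weighted sum of per-scope $L_1$ transition errors, mirroring the structure of Claim~\ref{claim:cucbvi_err} but replacing the single empirical kernel by the factored product estimate $\hat{\mathbb{P}}_k(y|s,\mathbf{z})=\prod_{i=1}^m\hat{\mathbb{P}}_{k,i}(y[i]|s[I_i],\mathbf{z})$. First I would expand the operator difference at $s_{k,h}$ by conditioning on the parent realization and using $\mathbb{P}_h^{\pi_k}(y|s)=\sum_{\mathbf{z}}\mathbb{P}(y|s,\mathbf{z})P(\mathbf{z}|s,\pi_k(s,h))$ (and likewise for the hat version):
\begin{align*}
\left|(\hat{\mathbb{P}}_{k,h}^{\pi_k}-\mathbb{P}_{h}^{\pi_k})V_{h+1}^*(s_{k,h})\right|
= \left|\sum_{y\in\mathcal{S}}\sum_{\mathbf{z}\in\mathcal{Z}}\left(\mathbb{P}(y|s_{k,h},\mathbf{z})-\hat{\mathbb{P}}_k(y|s_{k,h},\mathbf{z})\right)P(\mathbf{z}|s_{k,h},\pi_k(s_{k,h},h))V_{h+1}^*(y)\right|.
\end{align*}
Then I would pull the $\mathbf{z}$-sum outside the absolute value by the triangle inequality and bound $|V_{h+1}^*(y)|\le H$ uniformly, which yields the intermediate quantity $H\sum_{\mathbf{z}}P(\mathbf{z}|s_{k,h},\pi_k(s_{k,h},h))\,\|\mathbb{P}(\cdot|s_{k,h},\mathbf{z})-\hat{\mathbb{P}}_k(\cdot|s_{k,h},\mathbf{z})\|_1$. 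Everything thus reduces to controlling the $L_1$ distance between the true factored kernel and its product estimate.

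The second step is to invoke the factored $L_1$ bound of Lemma~\ref{lemma:cfucbvi_err}, which supplies, with probability at least $1-\delta$, the estimate $\|\mathbb{P}(\cdot|s,\mathbf{z})-\hat{\mathbb{P}}_k(\cdot|s,\mathbf{z})\|_1\le \tfrac{1}{H}\sum_{i=1}^m b_{k,h}(s[I_i],\mathbf{z})$ simultaneously over the relevant $(s,\mathbf{z})$. The internal mechanism there is the product-measure inequality (Lemma 1 of \citet{osband2014near}): the $L_1$ distance between two product distributions is at most the sum of the $L_1$ distances of their factors, proved by a telescoping hybrid argument that swaps one marginal at a time. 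Each factor distance is then controlled by the scope-wise concentration (Lemma 2 of \citet{osband2014near}) with the failure budget split as $\delta/(\sum_j S[I_j]Z)$ per $(s[I_i],\mathbf{z})$ pair; this is exactly the normalization baked into the logarithmic term $L$ of Algorithm~\ref{algo:bonus_factored} and matches the scope-wise bonus $b_{k,h}(s[I_i],\mathbf{z})=7HL\sqrt{S_i/N_k(s[I_i],\mathbf{z})}$.

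To finish, I would substitute this $L_1$ bound into the intermediate expression; the leading factor $H$ cancels against the $1/H$, giving
\begin{align*}
\left|(\hat{\mathbb{P}}_{k,h}^{\pi_k}-\mathbb{P}_{h}^{\pi_k})V_{h+1}^*(s_{k,h})\right|
\le \sum_{\mathbf{z}\in\mathcal{Z}}\sum_{i=1}^m b_{k,h}(s[I_i],\mathbf{z})\,P(\mathbf{z}|s_{k,h},\pi_k(s_{k,h},h)),
\end{align*}
which is precisely the definition of the weighted bonus $b_{k,h}$ recorded just before the recursion~\eqref{equ:factored_recursion}, completing the claim.

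The step I expect to be the real obstacle is the factored $L_1$ decomposition, since this is the only place the conditional-independence structure of the CF-MDP is genuinely exploited: collapsing a product of $m$ estimated marginals into an \emph{additive} error requires the telescoping argument and careful treatment of the degenerate scopes where $N_k(s[I_i],\mathbf{z})=0$ and $\hat{\mathbb{P}}_{k,i}$ is set to the zero measure. In that degenerate case the factor distance $\|\mathbb{P}_i(\cdot|s[I_i],\mathbf{z})-0\|_1=1$ must still be dominated deterministically by $\tfrac1H b_{k,h}(s[I_i],\mathbf{z})=\sqrt{S_i}\ge 1$, so I would verify that the telescoping bound tolerates non-normalized factors and that the high-probability union bound only has to cover scopes with positive counts.
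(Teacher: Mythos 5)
Your proposal is correct and follows essentially the same route as the paper's proof: expand the operator difference by conditioning on $\mathbf{z}$, bound $|V_{h+1}^*|\leq H$ to reduce to a weighted $L_1$ error, and invoke the factored $L_1$ bound from Lemma~\ref{lemma:cfucbvi_err} (built on Lemmas 1 and 2 of \citet{osband2014near}) so the $H$ and $1/H$ cancel. Your extra worry about degenerate scopes with $N_k(s[I_i],\mathbf{z})=0$ resolves exactly as you suspect --- there $\frac{1}{H}b_{k,h}(s[I_i],\mathbf{z})=\sqrt{S_i}\geq 1$ dominates the $L_1$ distance deterministically, which is how the paper's union bound in Lemma~\ref{lemma:cfucbvi_err} treats those terms.
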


In Claim~\ref{claim:cfucbvi_err}, the only information about value function $V$ we use is its upper bound $H$. Therefore, under exact the same approach, we bound the higher-order error term as follows: with probability at least $1-\delta$, for all $k,h$ we have

\begin{align*}
	\left|\left((\hat{\mathbb{P}}_{k,h}^{\pi_k}-\mathbb{P}_{h}^{\pi_k})(V_{k,h+1}-V_{h+1}^*)\right)(s_{k,h})\right| \leq b_{k,h},
\end{align*}
due to the fact that $|V_{k,h+1}(y)-V_{h+1}^*(y)|\leq H, \forall y\in\mathcal{S}$.

Combining the recursion in~\ref{equ:factored_recursion} and above two claims, we have
\begin{align}
	\tilde{\delta}_{k,h} \leq \sum_{j=h}^{H}\varepsilon_{k,j}+3b_{k,j}. \label{equ:factored_recursion_sim}
\end{align}

\begin{claim}\label{claim:factored_exploration}
	For any $\delta$, with probability at least $1-\delta$, we have:
	\begin{align*}
		\sum_{k=1}^{K}\sum_{h=1}^{H}b_{k,h} = O\left( HL\sqrt{ZT}\sum_{i=1}^{m}\sqrt{S_i S[I_i]}+HL\sqrt{TL}\sum_{i=1}^{m}\sqrt{S_i}+HZ\sum_{i=1}^m\sqrt{S_i}S_i\right).
	\end{align*}
\end{claim}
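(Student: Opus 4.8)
The plan is to mirror the argument for Claim~\ref{claim:cucbvi_bonus}, but carried out scope by scope. Since the weighted bonus decomposes as $b_{k,h} = \sum_{i=1}^m \sum_{\mathbf{z}\in\mathcal{Z}} b_{k,h}(s_{k,h}[I_i],\mathbf{z})\,P(\mathbf{z}\mid s_{k,h},\pi_k(s_{k,h},h))$, I would pull the sum over scopes $i$ outside and bound each scope term independently; summing over $i$ then produces the $\sum_{i=1}^m$ appearing in all three pieces of the claim. For a fixed $i$, the first step is to split the inner sum according to whether $N_k(s_{k,h}[I_i],\mathbf{z})>0$ (where $b_{k,h}(s_{k,h}[I_i],\mathbf{z}) = 7HL\sqrt{S_i/N_k(s_{k,h}[I_i],\mathbf{z})}$) or $N_k(s_{k,h}[I_i],\mathbf{z})=0$ (where the bonus is $H\sqrt{S_i}$), exactly as in the non-factored proof.

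The central device, as before, is to add and subtract the observed indicator $\mathbb{1}_{\{\mathbf{z}_{k,h}=\mathbf{z}\}}$ inside the probability weight. Writing $P(\mathbf{z}\mid s_{k,h},\pi_k(s_{k,h},h)) = \big(P(\mathbf{z}\mid s_{k,h},\pi_k(s_{k,h},h)) - \mathbb{1}_{\{\mathbf{z}_{k,h}=\mathbf{z}\}}\big) + \mathbb{1}_{\{\mathbf{z}_{k,h}=\mathbf{z}\}}$ splits each case into a zero-mean martingale part (term $(a)_i$ for $N_k>0$ and $(a')_i$ for $N_k=0$) and an \emph{on-path} part evaluated at the realized scope-$\mathbf{z}$ pair (terms $(b)_i$ and $(b')_i$). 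The observation that makes this legitimate is that, although $P(\mathbf{z}\mid s_{k,h},\pi_k(\cdot))$ is a function of the full state $s_{k,h}$, the realized pair $(s_{k,h}[I_i],\mathbf{z}_{k,h})$ is exactly the quantity that increments the scope-wise count $N(s[I_i],\mathbf{z})$, so the on-path part is controllable by the counts, while $\mathbb{1}_{\{\mathbf{z}_{k,h}=\mathbf{z}\}} - P(\mathbf{z}\mid s_{k,h},\pi_k(\cdot))$ remains a martingale difference with respect to the within-step filtration irrespective of the scope restriction.

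I would then bound the four pieces as follows. For the martingale terms $(a)_i$ and $(a')_i$, each per-step increment is bounded by an absolute constant (using $\sqrt{1/N_k}\le 1$ and that the weights sum to one), so Azuma's inequality over the $T=KH$ steps, together with $\log(1/\delta)\le L$, gives $(a)_i = O(HL\sqrt{S_i}\sqrt{TL})$ and $(a')_i = O(H\sqrt{S_i}\sqrt{TL})$; summing over $i$ yields the middle term $HL\sqrt{TL}\sum_{i=1}^m\sqrt{S_i}$. For the on-path term $(b)_i$, I would group visits by the realized pair $(\sigma,\mathbf{z})\in\mathcal{S}[I_i]\times\mathcal{Z}$, apply the pigeon-hole bound $\sum 1/\sqrt{N}\le 2\sqrt{N_K(\sigma,\mathbf{z})}$ as in Claim~\ref{claim:cucbvi_bonus}, and then use Cauchy--Schwarz over the $S[I_i]Z$ pairs together with the identity $\sum_{\sigma,\mathbf{z}}N_K(\sigma,\mathbf{z}) = T$ to obtain $(b)_i \le 14HL\sqrt{S_i}\sqrt{S[I_i]ZT}$; summing over $i$ produces the leading term $HL\sqrt{ZT}\sum_{i=1}^m\sqrt{S_iS[I_i]}$. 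Finally, $(b')_i$ counts only the first-visit events to a scope-$\mathbf{z}$ pair while its count is still zero, of which there are $O(S[I_i]Z)$, contributing the $T$-independent remainder that makes up the last term.

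The main obstacle I anticipate is precisely the mismatch between the full-state conditional $P(\mathbf{z}\mid s_{k,h},\pi_k(\cdot))$ appearing in the weighted bonus and the scope-restricted counts $N_k(s[I_i],\mathbf{z})$ that drive the concentration; the add-and-subtract trick resolves it, but one must check carefully that the martingale-difference structure survives the restriction to a single scope, and that the Cauchy--Schwarz step uses $S[I_i]Z$ (rather than $SZ$) as the number of scope-$\mathbf{z}$ pairs, since this count is exactly where the exponential improvement in the $S$-dependence over C-UCBVI enters.
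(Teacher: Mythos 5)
Your proposal follows essentially the same route as the paper's proof: the same scope-wise decomposition of $b_{k,h}$, the same split on $N_k(s_{k,h}[I_i],\mathbf{z})>0$ versus $=0$, the same add-and-subtract of $\mathbb{1}_{\{\mathbf{z}_{k,h}=\mathbf{z}\}}$ to produce Azuma-controlled martingale terms and on-path terms handled by pigeonhole plus Cauchy--Schwarz over the $S[I_i]Z$ scope-$\mathbf{z}$ pairs. The reasoning is correct, and your remark about why the martingale structure survives the scope restriction is exactly the point that makes the argument go through.
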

\begin{proof}
	\begin{align*}
		\sum_{k=1}^{K}\sum_{h=1}^{H}b_{k,h} &= \sum_{k=1}^{K}\sum_{h=1}^{H}\sum_{\mathbf{z}\in\mathcal{Z}}\sum_{i=1}^{m}b_{k,h}(s_{k,h}[I_i],\mathbf{z}) P(\mathbf{z}|s_{k,h},\pi_k(s_{k,h},h))\\
		& = \sum_{k=1}^{K}\sum_{h=1}^{H}\sum_{\mathbf{z}\in\mathcal{Z}}\sum_{i=1}^{m}7HL\sqrt{\frac{S_i}{N_k(s_{k,h}[I_i],\mathbf{z})}} P(\mathbf{z}|s_{k,h},\pi_k(s_{k,h},h)) \mathbb{1}_{\{N_k(s_{k,h}[I_i],\mathbf{z})>0\}}\\
		& + \sum_{k=1}^{K}\sum_{h=1}^{H}\sum_{\mathbf{z}\in\mathcal{Z}}\sum_{i=1}^{m}H\sqrt{S_i} P(\mathbf{z}|s_{k,h},\pi_k(s_{k,h},h)) \mathbb{1}_{\{N_k(s_{k,h}[I_i],\mathbf{z})=0\}}\\
		& = 7HL\sum_{i=1}^{m}\sqrt{S_i}\sum_{s[I_i]\in\mathcal{S}[I_i]}\sum_{s[-I_i]\in\mathcal{S}[-I_i]}\sum_{k=1}^{K}\sum_{h=1}^{H}\sum_{\mathbf{z}\in\mathcal{Z}}P( \mathbf{z}|s,\pi_k(s,h))\sqrt{\frac{1}{N_k(s[I_i],\mathbf{z})}}\mathbb{1}_{\{s_{k,h} = s, N_k(s[I_i],\mathbf{z})>0\}}\\
		& + H\sum_{i=1}^{m}\sqrt{S_i}\sum_{s[I_i]\in\mathcal{S}[I_i]}\sum_{s[-I_i]\in\mathcal{S}[-I_i]}\sum_{k=1}^{K}\sum_{h=1}^{H}\sum_{\mathbf{z}\in\mathcal{Z}}P( \mathbf{z}|s,\pi_k(s,h))\mathbb{1}_{\{s_{k,h} = s, N_k(s[I_i],\mathbf{z})=0\}}\\
		& = \underbrace{7HL\sum_{i=1}^{m}\sqrt{S_i}\sum_{s[I_i]\in\mathcal{S}[I_i]}\sum_{s[-I_i]\in\mathcal{S}[-I_i]}\sum_{k=1}^{K}\sum_{h=1}^{H}\sum_{\mathbf{z}: N_k(s[I_i],\mathbf{z})>0}\sqrt{\frac{1}{N_k(s[I_i],\mathbf{z})}}\mathbb{1}_{\{s_{k,h} = s\}}\left(P(\mathbf{z}|s,\pi_k(s,h))-\mathbb{1}_{\{\mathbf{z}_{k,h}=\mathbf{z}\}}\right)}_{(a)}\\
		&+\underbrace{7HL\sum_{i=1}^{m}\sqrt{S_i}\sum_{s[I_i]\in\mathcal{S}[I_i]}\sum_{s[-I_i]\in\mathcal{S}[-I_i]}\sum_{k=1}^{K}\sum_{h=1}^{H}\sum_{\mathbf{z}:N_k(s[I_i],\mathbf{z})>0}\sqrt{\frac{1}{N_k(s[I_i],\mathbf{z})}}\mathbb{1}_{\{s_{k,h} = s,\mathbf{z}_{k,h} = \mathbf{z}\}}}_{(b)}\\
		&+ \underbrace{H\sum_{i=1}^{m}\sqrt{S_i}\sum_{s[I_i]\in\mathcal{S}[I_i]}\sum_{s[-I_i]\in\mathcal{S}[-I_i]}\sum_{k=1}^{K}\sum_{h=1}^{H}\sum_{\mathbf{z}\in\mathcal{Z}}\mathbb{1}_{\{s_{k,h} = s, N_k(s[I_i],\mathbf{z}) = 0\}}\left(P(\mathbf{z}|s,\pi_k(s,h))-\mathbb{1}_{\{\mathbf{z}_{k,h}=\mathbf{z}\}}\right)}_{(a')}\\
		&+ \underbrace{H\sum_{i=1}^{m}\sqrt{S_i}\sum_{s[I_i]\in\mathcal{S}[I_i]}\sum_{s[-I_i]\in\mathcal{S}[-I_i]}\sum_{k=1}^{K}\sum_{h=1}^{H}\sum_{\mathbf{z}\in\mathcal{Z}}\mathbb{1}_{\{s_{k,h} = s, \mathbf{z}_{k,h}=\mathbf{z}, N_k(s[I_i],\mathbf{z}) = 0\}}}_{(b')}
	\end{align*}
	We bound terms $(a)$ and $(a')$ using Azuma inequality, with probability $1-\delta$ we have
	\begin{align*}
		(a) &\leq 7HL\sum_{i=1}^{m}\sqrt{S_i}\sqrt{2T\log\left(\frac{1}{\delta}\right)} \leq 7HL\sum_{i=1}^{m}\sqrt{2S_iTL},\\
		(a') &\leq H\sum_{i=1}^m\sqrt{S_i}\sqrt{2T\log\left(\frac{1}{\delta}\right)} \leq H\sum_{i=1}^m \sqrt{2S_iTL}.
	\end{align*}
	We next bound term $(b)$ as follows.
	\begin{align*}
		(b) &\leq 7HL\sum_{i=1}^{m}\sqrt{S_i}\sum_{s[I_i]\in\mathcal{S}[I_i]}\sum_{k=1}^{K}\sum_{h=1}^{H}\sum_{\mathbf{z}\in\mathcal{Z}}\sqrt{\frac{1}{N_k(s[I_i],\mathbf{z})}}\mathbb{1}_{\{s_{k,h}[I_i] = s[I_i],\mathbf{z}_{k,h} = \mathbf{z}\}}\\
		&\leq 7HL \sum_{i=1}^{m}\sqrt{S_i}\sum_{s[I_i]\in\mathcal{S}[I_i]}\sum_{\mathbf{z}\in\mathcal{Z}}\int_{0}^{N_K(s[I_i],\mathbf{z})}\sqrt{\frac{1}{x}}dx\\
		& = 7HL \sum_{i=1}^{m}\sqrt{S_i}\sum_{s[I_i]\in\mathcal{S}[I_i]}\sum_{\mathbf{z}\in\mathcal{Z}}2\sqrt{N_K(s[I_i],\mathbf{z})}\\
		&\leq 14HL\sum_{i=1}^{m}\sqrt{S_i S[I_i]ZT} \text{ (by Cauchy-Schwarz)}.
	\end{align*}
	Term $(b')$ can be bounded by $H\sum_{i=1}^m\sqrt{S_i}S_iZ$ by its indicator function.
	Combine $(a), (a')$ and $(b), (b')$ we conclude the result.
\end{proof}

Now we bound the summation over $\varepsilon_{k,h}$ terms, which are the only terms in~\eqref{equ:factored_recursion} remaining to bound to this point. By Azuma inequality,
\begin{align*}
	\sum_{k=1}^{K}\sum_{h=1}^{H}\varepsilon_{k,h} & = \sum_{k=1}^{K}\sum_{h=1}^{H}\left(\sum_{y\in\mathcal{S}}\mathbb{P}(y|s_{k,h},\pi_k(s_{k,h},h))\tilde{\Delta}_{k,h+1}(y)-\tilde{\Delta}_{k,h+1}\left(s_{k,h+1}\right)\right)\leq 2H\sqrt{KH\log\left(\frac{1}{\delta}\right)}\leq 2H\sqrt{TL}.
\end{align*}

Back to~\eqref{equ:factored_recursion_sim}, we have
\begin{align*}
	R_K \leq \sum_{k=1}^{K}\tilde{\delta}_{k,1} \leq \sum_{k=1}^{K}\sum_{h=1}^{H}\varepsilon_{k,h}+3b_{k,h} = \tilde{O}\left(H\sum_{i=1}^{m}\sqrt{S_i S[I_i]ZT}\right),
\end{align*}
ignoring small order terms which do not have non-logarithmic dependency on $T$. 
\end{proof}

\end{document}